\def\eqref#1{equation~\ref{#1}}
\def\1{\bm{1}}
\def\mA{{\bm{A}}}
\def\mD{{\bm{D}}}
\DeclareMathAlphabet{\mathsfit}{\encodingdefault}{\sfdefault}{m}{sl}
\SetMathAlphabet{\mathsfit}{bold}{\encodingdefault}{\sfdefault}{bx}{n}
\newcommand{\Var}{\mathrm{Var}}
\newcommand{\bW}{\text{\boldmath{$W$}}}
\newcommand{\bB}{\text{\boldmath{$B$}}}
\newcommand{\bD}{\text{\boldmath{$D$}}}
\newcommand{\bone}{\boldsymbol{1}}
\newcommand{\bI}{\boldsymbol{I}}
\newcommand{\bA}{\boldsymbol{A}}
\newcommand{\bM}{\boldsymbol{M}}
\newcommand{\bbP}{\mathbb{P}}
\newcommand{\bbR}{\mathbb{R}}
\newcommand{\erfc}{\text{erfc}}
\newcommand{\bp}{\mathsf{BP}}
\newtheorem{assumption}{\textbf{Assumption}}
\newtheorem{lemma}{\textbf{Lemma}}
\newtheorem{theorem}{\textbf{Theorem}}
\newcommand{\nn}{\nonumber}
\newcommand{\cN}{\mathcal{N}}
\begin{document}

	\begin{titlepage}
		\def\thepage{}
		\thispagestyle{empty}
		
		\title{Stabilize Deep ResNet with A Sharp Scaling Factor $\tau$} 
		
		\date{}

		\author{
			Huishuai Zhang\thanks{Microsoft Research Asia, {\tt \{huzhang, tyliu\}@microsoft.com}. Correspond to Huishuai Zhang and Wei Chen.}
			\and
			Da Yu\thanks{Sun Yat-sen University, {\tt yuda3@mail2.sysu.edu.cn}.}
			\and
			Mingyang Yi\thanks{University of Chinese Academy of Sciences, {\tt yimingyang17@mails.ucas.edu.cn}.}
			\and
			Wei Chen\thanks{Institute of Computing Technology, Chinese Academy of Sciences, {\tt chenwei2022@ict.ac.cn}.}
			\and
			Tie-Yan Liu\footnotemark[1] 
		}
		\maketitle
		\begin{abstract}
We study the stability and convergence of training deep ResNets with gradient descent. Specifically, we show that the parametric branch in the residual block should be scaled down by a factor $\tau =O(1/\sqrt{L})$ to guarantee stable forward/backward process, where  $L$ is the number of residual blocks. Moreover, we establish a converse result that the forward process is unbounded when $\tau>L^{-\frac{1}{2}+c}$, for any positive constant $c$. The above two results together establish a sharp value of the scaling factor in determining the stability of deep ResNet.  Based on the stability result, we further show that gradient descent finds the global minima if the ResNet is properly over-parameterized, which significantly improves over the previous work with a much larger range of $\tau$  that admits  global convergence. Moreover, we show that the convergence rate is independent of the depth,  theoretically justifying the advantage of ResNet over vanilla feedforward network. Empirically, with such a factor $\tau$, one can train deep ResNet without normalization layer. Moreover for ResNets with normalization layer, adding  such a factor $\tau$ also stabilizes the training and obtains significant performance gain for deep ResNet. 
\end{abstract}
	\end{titlepage}

\section{Introduction}\label{sec:introduction}

Residual Network (ResNet) has achieved great success in computer vision tasks  since the seminal paper \citep{he2016deep}. The ResNet structure has also been extended to natural language processing and achieved the state-of-the-art performance  \citep{vaswani2017attention, devlin2018bert}. In this paper, we study the forward/backward stability and convergence of training deep  ResNet with gradient descent. 

Specifically, we consider the following residual block \citep{he2016deep}, 
\begin{flalign}
&\text{ residual block: } \quad h_{l}=\phi(h_{l-1}+\tau \mathcal{F}_l(h_{l-1})),   \label{eq:resnet-block}
\end{flalign}
where $\phi(\cdot)$ is the point-wise activation function, $h_l$ and $h_{l-1}$ are the output and input of the residual block $l$, $\mathcal{F}_l(\cdot)$ is the parametric branch, e.g., $\mathcal{F}_l(h_{l-1}) = \bW_{l}h_{l-1}$ and $\bW_{l}$ is the trainable parameter, and $\tau$ is a scaling factor on the parametric branch. 

We note that standard initialization schemes, e.g., the Kaiming initialization or the Glorot initialization, are designed to keep the forward and backward variance constant when passing through one layer. However, things become different for ResNet because of the existence of the identity mapping. If $\bW_l$ adopts the standard initialization, a small $\tau$ is necessary for a stable forward process of deep ResNet, because the output magnitude quickly explodes for $\tau=1$ as $L$ gets large. On the other side, a limit form of \emph{Euler's constant} indicates that $\tau=O(1/L)$ is sufficient for the forward/backward stability, which is assumed in previous work \citep{allen2018convergence,du2019gradient}. We ask 

``Are there other values of $\tau$ that can guarantee the stability of ResNet with arbitrary depth?''

We answer the above question affirmatively by establishing a non-asymptotic analysis that the stability is guaranteed for deep ResNet with arbitrary depth as long as $\tau =O(1/\sqrt{L})$. Moreover conversely,  for any positive constant $c$, if $\tau = L^{-\frac{1}{2}+c}$, the network output norm grows at least with rate $L^c$ in expectation, which implies the forward/backward process is unbounded as $L$ gets large.

One step further, based on the stability result, we show that if the network is properly over-parameterized, gradient descent  finds global minima for training ResNet with $\tau\le \tilde{O}(1/\sqrt{L})$  \footnote{We use $\tilde{O}(\cdot)$ to hide logarithmic factors.}. This is essentially different from previous work that assumes $\tau \le \tilde{O}(1/L)$ \cite{allen2018convergence,du2018gradient,frei2019algorithm}. 

Our contribution is summarized as follows. 
\begin{itemize}
\item We establish a non-asymptotic analysis showing that $\tau =1/\sqrt{L}$ is sharp in the order sense to guarantee the stability of deep ResNet.

\item For $\tau\le \tilde{O}(1/\sqrt{L})$, we establish the convergence of gradient descent to global minima for training over-parameterized ResNet with a depth-independent rate. 

\end{itemize}

The key step to prove our first claim is a new  bound of the spectral norm  of the forward process for ResNet with $\tau= O(1/\sqrt{L})$. We find that, although the natural bound $(1+1/\sqrt{L})^L$ explodes, the randomness of the trainable parameter in the parametric branch helps to control the output norm growth. Specifically, we bound the mean and the variance  about the largest possible change after deep residual mappings  when $\tau=O(1/\sqrt{L})$. 

We also argue the advantage of adding $\tau$ over other stabilization methods, such as \emph{batch normalization} (BN) \cite{ioffe2015batch} and \emph{Fixup} \cite{zhang2019fixup}. First,  it has advantage over BN to guarantee stability. BN is architecture-agnostic and the output norm of ResNet with BN still grows unbounded as the depth increases. In practice, it has to to employ a learning rate warm-up stage to train very deep ResNet even with BN \cite{he2016deep}. In comparison, we prove that ResNet with $\tau$ is stable over all depths and hence does not require any learning rate warm-up stage. Second, it is also more stable than the approach of scaling down initialization that is adopted in \emph{Fixup}. Scaling down initial residual weight does not scale down the gradient properly and   \emph{Fixup} could explode after gradient descent updates for deep ResNet.

At last, we corroborate the theoretical findings with extensive experiments. First, we demonstrate that with $\tau=1/\sqrt{L}$, ResNet can be effectively trained without the normalization layers. It is more stable and achieves better performance than \emph{Fixup}. Second, we demonstrate that adding $\tau=1/\sqrt{L}$ on top of the normalization layer can obtain even better performance.

\subsection{Related works}
There is a large volume of literature studying ResNet. We can only give a partial list. 

To argue the benefit of skip connection, \citet{veit2016residual} interpret ResNet as an ensemble of shallower networks, \citet{zhang2018local} study the local Hessian of residual blocks, \citet{hardt2016identity} show that deep linear residual networks have no spurious local optima,  \citet{orhan2018skip} observe that skip connection eliminates the singularity, and \citet{balduzzi2017shattered} find that ResNet is  more resistant to the gradient shattering problem than the feedforward network. However, these results mainly rely on empirical observation or strong model assumption. 

There are also several papers studying ResNet from the  stability perspective \cite{arpit2019initialize, zhang2019fixup, zhang2019towards, yang2017mean, haber2017stable}. In comparison, we study the model closest to the original ResNet and provide a rigorous non-asymptotic analysis for the stability when $\tau=O(1/\sqrt{L})$ and  a converse result showing the sharpness of $\tau$. We also demonstrate the empirical advantage of learning ResNet with $\tau$.

Our work is also related to recent literature on the theory of learning deep neural network with gradient descent in the over-parameterized regime. Many works \cite{jacot2018neural,allen2018convergence, du2018gradient, chizat2018global,  zou2018stochastic, zou2019improved,  arora2019exact, oymak2018overparameterized,chen2019much, ji2019polylogarithmic} use Neural Tangent Kernel (NTK) or similar technique to argue the global convergence of gradient descent for training over-parameterized deep neural network.  Some \cite{brutzkus2017sgd, li2018learning, allen2019learning, arora2019fine, cao2019generalization, neyshabur2018role} study the generalization properties of over-parameterized neural network. On the other side, there are papers \cite{ghorbani2019limitations,chizat2019lazy, yehudai2019power, allen2019can} discussing the limitation of the NTK approach in characterizing the behavior of neural network. Additionally, several papers  \cite{chizat2018note,mei2018mean,mei2019mean, nguyen2019mean, fang2019over,fang2019convex} study the convergence of the weight distribution in the probabilistic space via gradient flow for two or multiple layers network. To the best of our knowledge, we are  the first to provide the global convergence of learning ResNet in the regime of  $\tau\le O(1/\sqrt{L})$ 

\section{Preliminaries \label{sec:model}}
There are many residual network models since the seminal paper \citep{he2016deep}. Here we  study a simplified ResNet that shares the same structure as \cite{he2016deep} \footnote{In \cite{he2016deep}, there is a ReLU after the building block $y=x+F(x)$ (please refer to Figure. 2 in \cite{he2016deep}), and hence a whole residual block is $h_l = \phi(h_{l-1} + F(h_{l-1}))$ (if using the notations in our paper). }, which is described as follows,
\begin{itemize}
\item Input layer: $h_{0}=\phi(\bA x)$, where $x\in\bbR^{p}$ and $\bA\in\bbR^{m\times p}$;
\item $L-1$ residual blocks: $h_{l}=\phi(h_{l-1}+\tau\bW_{l}h_{l-1})$, where $\bW_{l}\in\bbR^{m\times m}$; 
\item A fully-connected layer: $h_{L}=\phi(\bW_{L}h_{L-1})$, where $\bW_{L}\in\bbR^{m\times m}$;
\item Output layer: $y=\bB h_{L}$, where $\bB\in\bbR^{d\times m}$;
 \item Initialization: The entries of $\bA, \bW_l$ for $l\in [L-1]$, $\bW_L$ and $\bB$ are independently sampled from $\cN(0,\frac{2}{m})$, $\cN(0,\frac{1}{m})$, $\cN(0,\frac{2}{m})$ and $\cN(0,\frac{1}{d})$, respectively;
\end{itemize}
where $\phi(\cdot):=\max \{0, \cdot\}$ is the ReLU activation function. We assume the input dimension is $p$, the intermediate layers have the same width $m$ and the output has dimension $d$. For a positive integer $L$, we use $[L]$ to denote the set $\{1, 2, ..., L\}$.  We denote the values before activation by $g_{0}=\bA x,g_{l}=h_{l-1}+\tau\bW_{l}h_{l-1}$ for $l=[L-1]$ and $g_{L}=\bW_{L}h_{L-1}$. We use $h_{i,l}$ and $g_{i,l}$ to denote the value of $h_{l}$ and $g_{l}$, respectively, when the input vector is $x_{i}$, and $\bD_{i,l}$ the diagonal activation matrix where $[\bD_{i,l}]_{k,k}=\bone_{\{(g_{i,l})_{k}\ge0\}}$.  We use superscript $^{(0)}$ to denote the value at initialization, e.g., $\bW_l^{(0)}$, $h_{i,l}^{(0)}$, $g_{i,l}^{(0)}$ and $\bD_{i,l}^{(0)}$. We may omit the subscript $_i$ and the superscript $^{(0)}$ when they are clear from the context for simplifying the notations.

We introduce a notation $\overrightarrow{\bW}:=(\bW_1, \bW_2, \dots, \bW_L)$ to represent all the trainable parameters. We note that $\bA$ and $\bB$ are fixed after initialization. Throughout the paper, we use $\|\cdot\|$ to denote the $l_2$ norm of  a vector. We further use $\|\cdot\|$ and $\|\cdot\|_F$  to denote the spectral norm and the Frobenius norm of a matrix, respectively. Denote $\|\overrightarrow{\bW}\| := \max_{l\in[L]}\|\bW_l\|$ and $\|\bW_{[L-1]}\| := \max_{l\in[L-1]}\|\bW_l\|$.

The training data set is $\{(x_i, y_i^*)\}_{i=1}^n$, where $x_i$ is the feature vector and $y_i^*$ is the target signal for $i=1, ..., n$.  We consider  the objective function is $F(\overrightarrow{\bW}):=\sum_{i=1}^{n}F_{i}(\overrightarrow{\bW})$ where $F_{i}(\overrightarrow{\bW}):=\ell(\bB h_{i,L}, y_{i}^{*})$ and $\ell(\cdot)$ is the loss function. The model is trained by running the gradient descent algorithm. Though ReLU is nonsmooth, we abuse the word ``gradient" to represent the value computed through back-propagation. 

\section{Forward and Backward Stability of ResNet}\label{sec:stability}

In this section, we establish the stability of training ResNet. We show that when  $\tau = O(1/\sqrt{L})$ the forward and backward pass is bounded at  the initialization and after small perturbation. On the converse side, for an arbitrary positive constant $c$, if $\tau>L^{-0.5+c}$, the output magnitude grows at least polynomial with depth at the initialization. We also argue the advantage of using a factor $\tau$ over other stabilization methods, such as BN and \emph{Fixup}. The stability result forms the basis to establish the global convergence in Section~\ref{sec:main-result}. 

\subsection{Forward process is bounded if $\tau= O(1/\sqrt{L})$}

We first give a  non-asymptotic bound on the forward process at initialization.

\begin{restatable}{theorem}{spectralnormtight}\label{thm:initial-spectral-norm}

Suppose that $\overrightarrow{\bW}^{(0)}$, $\bA$ are randomly generated as in the initialization step, and  $\bD^{(0)}_{i, 0},\dots,\bD^{(0)}_{i, L}$  are  diagonal activation matrices for $i\in [n]$. Suppose that $c$ and $\epsilon$ are arbitrary positive constants with $0<\epsilon <1$. If $\tau$ satisfies $\tau^2L \le \min \{ \frac{1}{2}\log(1+c), \frac{\log^2(1+c)}{16(1+\log(1+2/\epsilon))}\}$,  then with probability at least $1-3nL^2\cdot\exp\left(-m\right)$ over the initialization randomness, we have for any two integers $a,b\in [L-1]$ with $b>a$ and for all $i\in [n]$, {\small
\begin{flalign}
\left\|\bD^{(0)}_{i,b}\left(\bI+\tau\bW_{b}^{(0)}\right)\cdots \bD^{(0)}_{i,a}\left(\bI+\tau\bW_{a}^{(0)}\right)\right\|\le \frac{1+c}{1-\epsilon}. \label{eq:resnetupperbound}
\end{flalign}}
\end{restatable}

{The proof is based on Markov's inequality with recursively conditioning. The full proof is deferred to  Appendix \ref{app:thm:initial-spectral-norm}. Here we give an outline.

\begin{proof}[Proof Outline] 
We omit the subscript $i$ and the superscript $(0)$ for simplicity. Suppose that $\|h_{a-1}\|=1$.  Let $g_l = h_{l-1}+\tau \bW_lh_{l-1}$ and $h_l = \bD_l g_{l}$ for $l=\{a,..., b\}$. We have 
\begin{flalign*}
\|h_{b}\|^2& = \frac{\|h_{b}\|^2}{\|h_{b-1}\|^2}  \cdots\frac{\|h_{a}\|^2}{\|h_{a-1}\|^2}\|h_{a-1}\|^2
\le \frac{\|g_{b}\|^2}{\|h_{b-1}\|^2}  \cdots\frac{\|g_{a}\|^2}{\|h_{a-1}\|^2}\|h_{a-1}\|^2,
\end{flalign*}
where the inequality is due to that $\|\bD_l\|\le 1$. Taking logarithm at both side, we have
\begin{equation*}
\log{\|h_{b}\|^2}\le \sum_{l=a}^{b}\log \Delta_{l},\quad  \text{where  } \Delta_{l} := \frac{\|g_{l}\|^2}{\|h_{l-1}\|^2}. 
\end{equation*}
If let $\tilde{h}_{l-1} := \frac{h_{l-1}}{\|h_{l-1}\|}$, then we obtain that{\small
\begin{equation*}
\begin{aligned}
\log{\Delta_{l}} &= \log\left(1 + 2\tau\left\langle \tilde{h}_{l-1},\bW_{l}\tilde{h}_{l-1} \right\rangle + \tau^{2}\|\bW_{l}\tilde{h}_{l-1}\|^{2}\right)\\
&\leq  2\tau\left\langle \tilde{h}_{l-1},\bW_{l}\tilde{h}_{l-1} \right\rangle + \tau^{2}\|\bW_{l}\tilde{h}_{l-1}\|^{2},
\end{aligned}
\end{equation*}}
where the inequality is because $\log (1+x) < x$ for $x>-1$. Let $\xi_{l} := 2\tau\left\langle \tilde{h}_{l-1},\bW_{l}\tilde{h}_{l-1} \right\rangle$ and $\zeta_{l}:= \tau^{2}\|\bW_{l}\tilde{h}_{l-1}\|^{2}$. Then given $\tilde{h}_{l-1}$, we have $\xi_{l}\sim \cN\left(0, \frac{4\tau^2}{m}\right)$, $\zeta_{l}\sim \frac{\tau^{2}}{m}\chi_{m}^2$. 

We can argue that $\sum_{l=a}^{b} \xi_l \sim \cN\left(0, \frac{4(b-a)\tau^2}{m}\right)$ and $\sum_{l=a}^{b} \zeta_l \sim  \frac{(b-a)\tau^{2}}{m}\chi_{m}^2$. Hence for arbitrary positive constant $c_1$, if $\tau^2L \le c_1/4$ then $ \sum_{l=a}^{b}\log \Delta_{l}\le c_1$ with probability at least $1- 3\exp(-\frac{mc_1^2}{64\tau^2L})$. We then convert the condition on $c_1$ to that on $c$ in the theorem. Taking $\varepsilon$-net argument, we can establish the spectral norm bound for all vector $h_{a-1}$. Let $a$ and $b$ vary from $1$ to $L-1$ and taking the union bound gives the claim. The full proof is presented in Appendix \ref{app:thm:initial-spectral-norm}.
\end{proof}}

We note that the constant $c$ and $\epsilon$ can be chosen arbitrarily small such that $(1+c)/(1-\epsilon)$ is arbitrarily close to $1$ given stronger assumption on $\tau^2L$. Theorem 1 indicates that the norm of every residual block output is upper bounded by $(1+c)/(1-\epsilon)$ if the input vector has norm 1, which demonstrates that the the forward process is stable. 
This result is a bit surprising since for $\tau = O(1/\sqrt{L})$  a natural  bound on the spectral norm $\|(\bI+\tau\bW^{(0)}_L)\cdots (\bI+\tau\bW^{(0)}_1)\|\le (1+\frac{1}{\sqrt{L}})^L$ explodes. Here the intuition is that the cross-product term concentrates on the mean 0 because of the independent randomness of matrices $\bW_l^{(0)}$ and the variance can be bounded at the same time.

Moreover, we can also establish a lower bound on the output norm of each residual block as follows. 
\begin{restatable}{theorem}{hnorminitialization}\label{thm:hnorm-initialization}
Suppose that $c$ is an arbitrary constant with $0<c<1$. If $\tau^2L \le  \frac{1}{4}\log (1-c)^{-1}$, then with probability at  least $1-2nL^2\cdot \exp\left(-\frac{1}{32}m\log(1-c)^{-1}\right)$
over the randomness of $\bA\in\bbR^{m\times p}$ and $\overrightarrow{\bW}^{(0)}\in(\bbR^{m\times m})^{L}$ the following holds
{\small
\begin{flalign}
\forall i\in[n],l\in[L-1]:\;\left\|h_{i,l}^{(0)}\right\|\ge 1-c.
\end{flalign}}
\end{restatable}

\begin{proof}
The proof is similar to that of Theorem 1 but harder. The high level idea is to control the mean and the variance of the mapping of the intermediate residual blocks  simultaneously by utilizing the Markov's inequality with the recursive conditioning.  The full proof is deferred to Appendix \ref{app:thm:hnorm-initialization}.
\end{proof}

Combining these two theorems, we can conclude that the  norm of each residual block output concentrates around $1$ with high probability $1-O(nL^2) \exp(-\Omega(m))$. Moreover these two theorems also holds for $\overrightarrow{\bW}$ that is within the neighborhood of $\overrightarrow{\bW}^{(0)}$, which is presented in Appendix \ref{app:sec:perturbed-stability}.

\subsection{Backward process is bounded for $\tau\le O(1/\sqrt{L})$} \label{subsec:backward-stability}

For ResNet, the gradient with respect to the parameter is computed through back-propagation. For any input sample $i$, we denote $\partial h_{i,l} := \frac{\partial F_i(\overrightarrow{\bW})}{\partial h_{i,l}}$ and $ \nabla_{\bW_{l}}F_i(\overrightarrow{\bW}) := \frac{\partial F_i(\overrightarrow{\bW})}{\partial \bW_{l}} =\left(\tau \mD_{i,l} \partial h_{i,l}\right)\cdot h_{i,l-1}^T$. Therefore, the gradient upper bound is guaranteed if  $h_{i,l}$ and  $\partial h_{i,l}$ are bounded for all blocks. We next show that the backward process is  bounded for each individual sample at the initialization stage. 

\begin{restatable}{theorem}{gradientupperbound}\label{thm:gradient-upperbound}
For every input sample $i\in[n]$ and for any positive constants $c$ and $\epsilon$ with $0<\epsilon<1$, if $\tau$ satisfies $\tau^2L \le \min \{ \frac{1}{2}\log(1+c), \frac{\log^2(1+c)}{16(1+\log(1+2/\epsilon))}\}$,  then with probability at least $1- 3nL^2\cdot \exp\left(-\frac{1}{4}mc^2\right)$ over the randomness of $\bA,\bB$ and $\overrightarrow{\bW}^{(0)}$, the following holds $\forall l\in[L-1]$  
{\small
\begin{align}\label{eq:gradient-upperbound}
&\|\nabla_{\bW_{l}}F_i(\overrightarrow{\bW}^{(0)})\|_{F}\le \frac{(1+c)^2}{(1-\epsilon)^2}(2\sqrt{2}+c)\tau \|\partial h_{i,L}\|, \;\; \|\nabla_{\bW_{L}}F_i(\overrightarrow{\bW}^{(0)})\|_{F}\le \frac{(1+c)}{(1-\epsilon)}\|\partial h_{i,L}\|. 
\end{align}}
\end{restatable}

{The full proof is is deferred to Appendix \ref{app:thm:gradient-upperbound}. Here we give an outline.

\begin{proof}[Proof Outline]
The argument is based on the back-propagation formula and Theorem \ref{thm:initial-spectral-norm}. We omit the superscript $^{(0)}$ for notation simplicity. For each $i\in[n]$ and $l\in[L-1]$, i.e., the residual layers, we have
{\small
\begin{flalign*}
\|\nabla_{\bW_{l}}F_i(\overrightarrow{\bW})\|_{F} &=\left\|\tau \left(\bD_{i,l} (\bI+\tau \bW_{l+1})^T\cdots \bD_{i,L-1}\bW_L^T\bD_{i,L} \partial h_{i,L}\right) h_{i, l-1}^T\right\|_F \\
&\le  \tau\|\bD_{i,l} (\bI+\tau \bW_{l+1})^T\cdots \bD_{i,L-1}\| \cdot \|\bW_L^T\bD_{i,L}\| \cdot \|\partial h_{i,L}\|\cdot \|h_{i,l-1}\|,\\
&\le \frac{(1+c)^2}{(1-\epsilon)^2}(2\sqrt{2}+c)\tau \|\partial h_{i,L}\|,
\end{flalign*}}
where the last inequality is due to Theorem \ref{thm:initial-spectral-norm} and the  spectral norm bound of $\bW_L$  given in Appendix \ref{app:useful-lemmas}. The full proof is deferred to Appendix \ref{app:thm:gradient-upperbound}.
\end{proof}}

This theorem indicates that the gradient of residual layers could be $\tau$ times smaller than the usual feedforward layer. Moreover, for ResNet with $\tau=1/\sqrt{L}$, the norm of all layer gradient is independent of the depth, which allows us to use a \emph{depth independent learning rate} to train ResNets of all depths. This is essentially different from the feedforward case \cite{allen2018convergence,zou2019improved}.  We note that the gradient upper bound also holds for $\overrightarrow{\bW}$ within the neighborhood of $\overrightarrow{\bW}^{(0)}$ (see details in Appendix \ref{app:sec:perturbed-stability} and \ref{app:thm:gradient-upperbound}).

\subsection{A converse result for $\tau > L^{-\frac{1}{2}+c}$ \label{subsec:converse}}

We have built the stability of the forward/backward process for $\tau = O(1/\sqrt{L})$. We next establish a converse result showing that if $\tau$ is slightly larger than $L^{-\frac{1}{2}}$ in the order sense, the network output norm grows uncontrollably as the depth $L$ increases. This justifies the sharpness of the value $\tau = 1/\sqrt{L}$.  Without loss of generality, we assume $\|h_0\|=1$. 

\begin{restatable}{theorem}{conversetau}\label{thm:converse_tau}
Suppose that $c$ is an arbitrary positive constant and the ResNet is defined and initialized as in Section \ref{sec:model}. If  $\tau\ge L^{-\frac{1}{2}+c}$, then we have
\begin{flalign}
 \mathbb{E}\|h_{L}\|^2 \ge \frac{1}{2}L^{2c}.
\end{flalign}
\end{restatable}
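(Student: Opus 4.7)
The plan is to derive a clean multiplicative recursion for $\mE\|h_l\|^2$ down the network, exploiting two structural features: $\bW_l$ is independent of $h_{l-1}$, so I can condition on $h_{l-1}$ and treat the $l$-th layer as a fresh Gaussian transformation; and $h_{l-1}$ is coordinate-wise nonnegative because it is the output of a ReLU. A direct Gaussian/ReLU computation at the input layer gives $\mE\|h_0\|^2 = \mE\|\phi(\bA x)\|^2 = 1$ when $\|x\|=1$, and the same computation applied to the final layer yields $\mE[\|h_L\|^2 \mid h_{L-1}] = \|h_{L-1}\|^2$, so the whole argument reduces to lower-bounding $\mE\|h_{L-1}\|^2$.

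The technical core is a one-dimensional inequality: for any $\mu \ge 0$ and $X \sim \cN(\mu, \sigma^2)$,
\[
\mE[\phi(X)^2] \ge \mu^2 + \tfrac{1}{2}\sigma^2.
\]
I would prove it by writing $\mE[\phi(X)^2] = \mE[X^2] - \mE[X^2 \mathbf{1}_{X\le 0}]$; setting $Y = X - \mu \sim \cN(0,\sigma^2)$, on the event $\{Y \le -\mu\}$ we have $|Y+\mu| = |Y| - \mu \le |Y|$, so $(Y+\mu)^2 \le Y^2$ pointwise and therefore
\[
\mE[X^2 \mathbf{1}_{X\le 0}] = \mE[(Y+\mu)^2 \mathbf{1}_{Y \le -\mu}] \le \mE[Y^2 \mathbf{1}_{Y\le 0}] = \sigma^2/2.
\]
Applying this coordinate-wise at layer $l$: conditional on $h_{l-1}$, the $k$-th pre-activation $(h_{l-1} + \tau \bW_l h_{l-1})_k$ is $\cN(h_{l-1,k}, \, 2\tau^2\|h_{l-1}\|^2/m)$ with nonnegative mean $h_{l-1,k}$, so summing over $k = 1, \ldots, m$ yields
\[
\mE[\|h_l\|^2 \mid h_{l-1}] \ge \|h_{l-1}\|^2 + \tau^2 \|h_{l-1}\|^2 = (1+\tau^2)\|h_{l-1}\|^2.
\]

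Taking outer expectations and iterating through the $L-1$ residual layers gives $\mE\|h_L\|^2 = \mE\|h_{L-1}\|^2 \ge (1+\tau^2)^{L-1}$. With $\tau^2 \ge L^{-1+2c}$, the elementary estimate $\log(1+x) \ge x/2$ on $0 \le x \le 1$ yields $(1+\tau^2)^{L-1} \ge \exp((L-1)\tau^2/2) \ge \exp(L^{2c}/4)$, which exceeds $L^{2c}$ for $L$ large (the case $\tau^2 > 1$ is trivially even larger). The main obstacle is the one-dimensional Gaussian inequality: the naive bound $\mE[\phi(X)^2] \ge \tfrac{1}{2}\mE[X^2]$, which is all one gets without exploiting the sign of the mean, would give only $((1+2\tau^2)/2)^{L-1}$ and vanish exponentially in $L$ for small $\tau$. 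The nonnegativity of $h_{l-1}$ inherited from the previous ReLU is exactly what lets us keep $\mu^2$ in full and lose only half of $\sigma^2$, recovering a clean $(1+\tau^2)$ per-layer growth factor that drives the expected squared output past $L^{2c}$ once $\tau$ crosses the critical scale $L^{-1/2}$.
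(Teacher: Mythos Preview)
Your proof is correct and follows a different, cleaner route than the paper. The paper establishes the inductive lower bound $(h_l)_k \ge \phi\bigl(\sum_{a=1}^l \tau(\bW_a h_{a-1})_k\bigr)$, then argues that the accumulated sum is mean-zero Gaussian (a step that is actually loose, since $h_{a-1}$ depends nonlinearly on $\bW_1,\dots,\bW_{a-1}$ through ReLU, so the conditional-variance structure does not force joint Gaussianity) to obtain $\mE\|h_l\|^2 \ge \tau^2\sum_{a=1}^l \mE\|h_{a-1}\|^2$; combined with a separate monotonicity claim $\mE\|h_l\|^2 \ge \mE\|h_{l-1}\|^2$ (justified by ``numerical integral''), this gives $\mE\|h_{L-1}\|^2 \ge \tau^2(L-1) \approx L^{2c}$. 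You instead work one layer at a time via the scalar inequality $\mE[\phi(X)^2]\ge \mu^2+\sigma^2/2$ for $X\sim\cN(\mu,\sigma^2)$ with $\mu\ge 0$, which simultaneously delivers the monotonicity and the $\tau^2$-gain in a single clean step, yielding the multiplicative recursion $\mE[\|h_l\|^2\mid h_{l-1}]\ge (1+\tau^2)\|h_{l-1}\|^2$. This avoids the Gaussianity issue entirely and produces the stronger bound $(1+\tau^2)^{L-1}$, which is exponential in $L^{2c}$ rather than merely $L^{2c}$. One small remark: your final comparison $\exp(L^{2c}/4)>L^{2c}$ does require $L$ large; if you want a statement valid for all $L$, you can instead use Bernoulli, $(1+\tau^2)^{L-1}\ge 1+(L-1)\tau^2$, to recover exactly the paper's weaker bound without any asymptotic caveat.
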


\begin{proof}
The proof is based on a new inequality $(h_{l})_{k}\geq\phi\left(\sum_{a=1}^{l}\left(\tau\bW_{a}h_{a-1}\right)_{k}\right)$ for $l\in[L-1]$ and for $k\in[m]$. By the symmetry of Gaussian variables and the recursive conditioning, we can compute the expectation of $\|h_L\|^2$ exactly. The whole proof is relegated to Appendix \ref{app:sec:proof-converse-tau}.
\end{proof}

This indicates that $\tau= O(1/\sqrt{L})$ is sharp to guarantee the forward stability of deep ResNet. We note that Theorem \ref{thm:initial-spectral-norm} and \ref{thm:gradient-upperbound} hold with high probability when $m> \Omega(\log L)$ and Theorem \ref{thm:hnorm-initialization} holds with high probability when $m>\Omega(\log(nL))$. These are very mild conditions on the width $m$, which are satisfied by practical networks.

\subsection{Comparison with other approaches for stability}\label{subsec:comparison}

Up to now, we have provided a sharp value of $\tau$ in terms of determining the stability of deep ResNet. In practice, two other approaches are used in residual networks to provide the stability: adding normalization layers, e.g., batch normalization (BN) \citep{ioffe2015batch}, and scaling down the initial residual weights, e.g., \emph{Fixup} \citep{zhang2019fixup}. Next, we discuss BN and \emph{Fixup} from the stability perspective, respectively, and make comparison with adding $\tau=1/\sqrt{L}$.

Batch normalization is placed right after each convolutional layer in \citep{he2016deep}. Here for the ResNet model defined in Section~\ref{sec:model}, we put BN after each parametric branch and the residual block becomes $h_l = \phi(h_{l-1} + \tilde{z}_l)$, where $(\tilde{z}_{i,l})_k := \text{BN}\left((z_{i,l})_k\right) = \frac{(z_{i,l})_k- \mathbb{E}[(z_{\cdot, l})_k]}{\sqrt{\Var[(z_{\cdot, l})_k]}}$ and $(z_{i,l})_k := \left(\bW_lh_{i,l-1}\right)_k$ for $k=[m]$ and $l=[L-1]$, and the expectation and the variance are taken over samples in a mini-batch. Then we have $\mathbb{E} (\tilde{z}_{\cdot, l})_k =0$ and $\Var[(\tilde{z}_{\cdot,l})_k] = 1$. We use the following proposition to estimate the norm of  each residual block output for the ResNet with BN.

\begin{restatable}{proposition}{normwithbn}\label{clm:norm-with-bn}
Assume that $(\tilde{z}_{l})_k$ are independent random variable over $l,k$ with $\mathbb{E} (\tilde{z}_{l})_k =0$ and $\Var[(\tilde{z}_{l})_k] = 1$. The    output norm of the residual block $l$  satisfies $\mathbb{E}\|h_{l}\|^2 \ge \frac{1}{2}ml$, for $l\in [L-1]$.
\end{restatable}
\begin{proof}
The proof is adapted from the proof of Theorem \ref{thm:converse_tau}, and is presented in Appendix \ref{app:sec:proof-converse-tau}.
\end{proof}

This indicates that the block output norm of ResNet with BN grows roughly at the rate $\sqrt{l}$ at the initialization stage, where $l$ is the block index and the larger $l$ the closer to the output. To verify this, we plot how the output norm of each residual block grows for ResNet1202 (with/without BN)\footnote{Throughout the paper, the naming rule of ResNet is as follows.``ResNet" is referred to the model defined in Section \ref{sec:model}, ``ResNet\#" is referred to the models in \citet{he2016deep} with removing all the BN layers, e.g., ResNet1202, ``ResNet\#$+$BN" corresponds to the original model in \cite{he2016deep}, ``$+$Fixup" corresponds to initializing the model with Fixup, and ``$+\tau$" is referred to adding $\tau$ on the output of the parametric branch in each residual block.}  in Figure \ref{fig:normgrow-bn}. We see that at epoch $0$ (initialization stage), the output norm grows almost with the rate $\sqrt{l}$ as predicted in Proposition \ref{clm:norm-with-bn}. After training, the estimation in Proposition \ref{clm:norm-with-bn} is not as accurate as the initialization because the independence assumption does not hold after training. Besides the output norm growth, in practice, \citet{he2016deep} have to use warm-up learning rates to train very deep ResNets, e.g., ResNet1202$+$BN. In contrast, it is proved that the approach of adding $\tau=1/\sqrt{L}$ is stable over all depths and hence does not require any learning rate warm-up stage.

\begin{figure}
\centering
    \begin{minipage}{0.6\textwidth}
        \begin{center}
        \centerline{\includegraphics[width=1.0\linewidth]{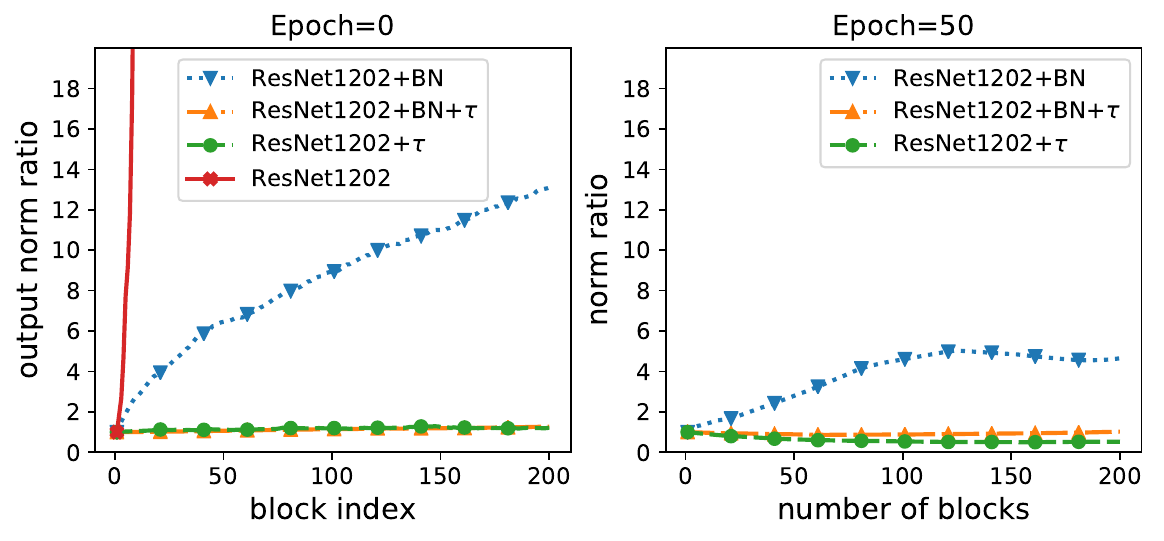}}
        \caption{The $l_2$ norm of residual block output of the first stage of ResNet1202 at epoch 0 and epoch 50.  The X axis is the block index and the Y axis is the output norm ratio compared to the first block.} 
        \label{fig:normgrow-bn}
        \end{center}
    \end{minipage}
    \hfill
    \begin{minipage}{0.3\textwidth}
        \begin{center}
        \centerline{\includegraphics[width=1.0\linewidth]{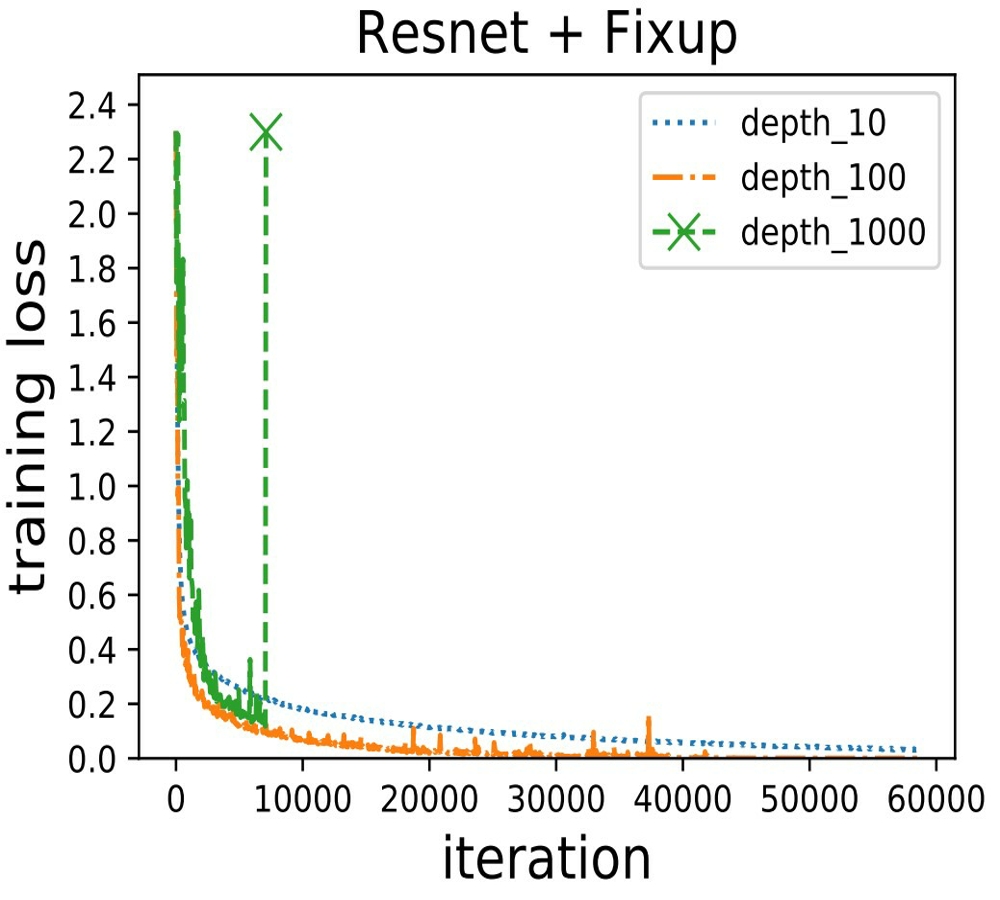}}
        \caption{Training curves of ResNets with Fixup: MNIST classification, width $m=128$ and learning rate $\eta=0.01$.}
        \label{fig:fixup-mnist}
        \end{center}
    \end{minipage}
\vspace{-2mm}
\end{figure}

Recently, \citet{zhang2019fixup} propose \emph{Fixup} to train residual networks without the normalization layer. Essentially for each residual block, \emph{Fixup} sets the weight matrix near the output to be 0  at the initialization stage, and then scales down the all other weight matrices by a factor that is determined by the network structure.  However, in practice \emph{Fixup} does not always converge for training very deep residual networks as shown in Section \ref{subsec:exp:learntau}. Moreover, for the ResNet model defined in Section~\ref{sec:model}, \emph{Fixup} could be unstable after gradient updates. The residual block is given by $h_l = \phi(h_{l-1} + \bW_l h_{l-1})$, and following \emph{Fixup}, $\bW_l^{(0)}$ is initialized to be $0$ for $l \in[L-1]$. At the initial stage for input sample $i$, $h_{i,l} = h_{i,0}$ and hence $\nabla_{\bW_{l}}F_i = \partial h_{i,L-1}\cdot h_{i,0}^T$, the same for all $l \in[L-1]$.   Then after one gradient update  the residual blocks mapping $\prod_{l=1}^{L-1}\bD_{i,l}(\bI +\eta \cdot\nabla_{\bW_{l}}F_i )$ could behave like $(\bD(\bI +\eta \cdot \partial h_{i,L-1}\cdot h_{i,0}^T))^{L-1}$ when $\bD_{i,l}=\bD$ for all $l$, which grows exponentially.  Empirically, such explosion is observed for deep ResNet with \emph{Fixup} (see Figure~\ref{fig:fixup-mnist}).  In contrast, the ResNet with $\tau$ is stable for varying depths (see Figure~\ref{fig:width}), as guaranteed by our theory. 

\section{Global Convergence for Over-parameterized ResNet}\label{sec:main-result}

In this section, we establish that gradient descent converges to global minima for learning an over-parameterized ResNet with $\tau \le \tilde{O}(1/\sqrt{L})$. Compared to the recent work \citep{allen2018convergence}, our result significantly enlarges the region of $\tau$ that admits the global convergence of gradient descent. Moreover, our result also theoretically justifies the advantage of ResNet over vanilla feedforward network in terms of facilitating the convergence of gradient descent. Before stating the theorem, we introduce common assumptions on the training data and the loss function \citep{allen2018convergence, zou2019improved, oymak2018overparameterized}. 

\begin{assumption}[training data] \label{assum:data}
For any $x_i$, it holds that $\|x_i\|=1$ and $(x_i)_p = 1/\sqrt{2}$.  There exists $\delta>0$, such that $\forall i,j \in[n], i\neq j, \|x_{i}-x_{j}\|\ge\delta$. 
\end{assumption}

The loss function $\ell(\cdot, \cdot)$ is quadratic and the individual objective is $F_{i}(\overrightarrow{\bW}):=\frac{1}{2}\|\bB h_{i,L}-y_{i}^{*}\|^{2}$. {We note that the assumption $(x_i)_p=1/\sqrt{2}$ means that the last coordinate of every $x_i$ is $1/\sqrt{2}$. This gives a random bias term after the first layer $\mA(\cdot)$, which makes the proof of Lemma 6 for the gradient lower bound easier. This assumption is because of the proof convenience rather than something that should be satisfied in practice.}

\begin{restatable}{theorem}{mainresult} \label{thm:main-result}
Suppose that the ResNet is defined and initialized as in Section \ref{sec:model} with  $\tau \le  O(1/(\sqrt{L}\log m))$ and the training data satisfy Assumption \ref{assum:data}. If the network width $m\ge \Omega(n^{8} L^7\delta^{-4}d\log^2 m)$,
then with probability at least $1-\exp(-\Omega(\log^{2}m))$, gradient
descent with learning rate $\eta=\Theta(\frac{d}{nm})$
finds a point $F(\overrightarrow{\bW})\le\varepsilon$ in $T=\Omega(n^2\delta^{-1}\log \frac{n\log^2 m}{\varepsilon})$
iterations.
\end{restatable}
\begin{proof}
The full proof is deferred to  Appendix \ref{app:thm:main-result}. 
\end{proof}

This theorem establishes the linear convergence of gradient descent for learning ResNet for the range  $\tau\le O(1/(\sqrt{L}\log m))$.  Combined with the unstable case of  $\tau> 1/\sqrt{L}$ in Section \ref{subsec:converse}, we give a nearly full characterization of the convergence in terms of the range of $\tau$. Moreover, our result indicates that the learning rate and the total number of iterations are depth-independent. We note that a recent paper \cite{frei2019algorithm} also achieves a depth-independent rate but only for the case $\tau \le O(1/(L\log m))$, whose proof critically relies on the choice of $\tau=1/L$. The overparameterization dependence and the number of iterations are not directly comparable as we are studying the regression problem while \cite{frei2019algorithm} is for the classification problem with different data assumption. Other previous results \cite{allen2018convergence, du2018gradient} characterize the convergence guarantee only for the case $\tau \le O(1/(L\log m))$, and their total number of iterations scales with the order $L^2$. Our depth-independent results are achieved by a tighter smoothness and gradient upper bound.

In the analysis with the feedforward case \citep{allen2018convergence, zou2019improved}, the learning rate has to  scale with $1/L^2$ and the total number of iterations scales with $L^2$ for the convergence of learning feedforward network. Therefore, our result theoretically justifies the advantage of ResNet over vanilla feedforward network in terms of facilitating the convergence of gradient descent. 

{Finally, we add a remark on the width requirement in Theorem \ref{thm:main-result}. The width grows polynomially with the number of training examples. Such dependence is because we need to more neurons to distinguish each data point sufficiently with more examples, which is common for the regression task \cite{allen2019convergence, zou2019improved}. This dependence could be avoid by assuming the training data follows specific distributions for the classification task \cite{cao2020generalization}. However this is orthogonal to our main claim that ResNet converges with a \emph{depth-dependent} rate.}

 \begin{figure}
\begin{center}
\centerline{\includegraphics[width=1.0\columnwidth]{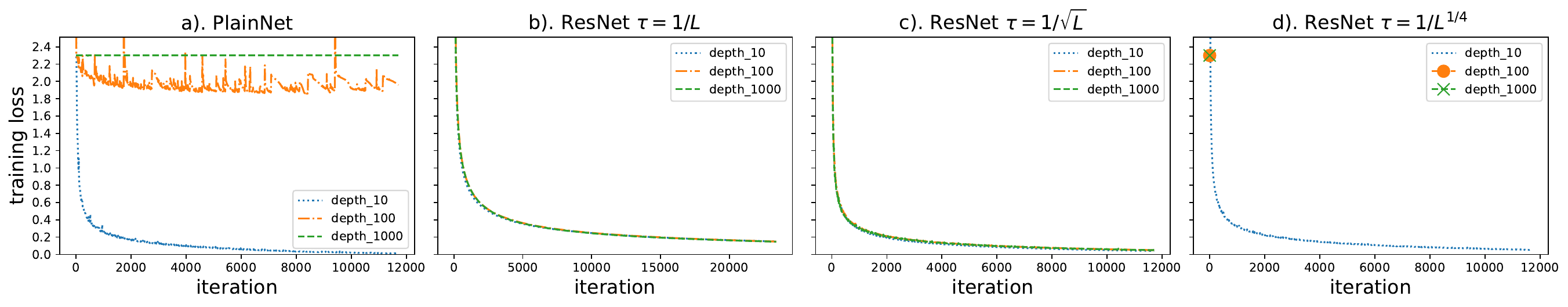}}
\caption{Training curves for PlainNet, ResNet with $\tau=\frac{1}{L}$, $\tau =\frac{1}{\sqrt{L}}$ and $\tau=\frac{1}{L^{1/4}}$ (from left to right). We use markers to denote the training encounters numerical overflow.}
\label{fig:width}
\end{center}
\end{figure}

\section{Empirical Study}\label{sec:experiment}

In this section, we present experiments to verify our theory and show the practical value of ResNet with $\tau$. We first compare the performance of ResNet with different $\tau$'s and demonstrate that $\tau=\frac{1}{\sqrt{L}}$ is a sharp value in determining the trainability of deep ResNet. We then compare the performance of adding the factor $\tau$ and using \emph{Fixup} initialization when training the popular residual networks without normalization layers. We finally show that with normalization layer, adding $\tau$ also significantly improve the performance for both CIFAR and ImageNet tasks. {Source code available online \hyperlink{https://github.com/dayu11/tau-ResNet}{https://github.com/dayu11/tau-ResNet}.}

\subsection{Theoretical verification}\label{subsec:experiment-theory}

We train feedforward fully-connected neural networks (PlainNet), ResNets with different values of $\tau$, and compare their convergence behaviors.  {Specifically, for ResNets, we adopts the exactly the same residual architecture as described in Eq.~(\ref{eq:resnet-block}) and Section \ref{sec:model}.}
The PlainNet adopts the same architecture as the ResNets without the skip connection.  The models are generated with width $m=128$ and  depth $L\in \{10, 100, 1000\}$. For ResNets with $\tau$, we choose $\tau =\frac{1}{L}, \frac{1}{\sqrt{L}},\frac{1}{L^{1/4}}$ to show the sharpness of the value $\frac{1}{\sqrt{L}}$. We conduct classification on the MNIST dataset \citep{lecun1998gradient}. We train the model with SGD \footnote{GD exhibits the same phenomenon.  We use SGD due to the  expensive per-iteration cost of GD.} and the size of minibatch is $256$. The learning rate is set to $\eta=0.01$ for all networks without tuning.

We plot the training curves in Figure~\ref{fig:width}. For ResNets with $\tau$, we see that both $\tau =\frac{1}{L}$ and $\tau=\frac{1}{\sqrt{L}}$ are able to train very deep ResNets successfully and $\tau=\frac{1}{\sqrt{L}}$ achieves lower training loss than $\tau=\frac{1}{L}$. For  $\tau=\frac{1}{L^{1/4}}$, the training loss explodes for models with depth $100$ and $1000$. This indicates that the bound $\tau=\frac{1}{\sqrt{L}}$ is sharp for learning deep ResNets. Moreover, the convergence of ResNets with $\tau=\frac{1}{\sqrt{L}}$ does not depend on the depth while training feedforward network becomes harder as the depth increases, corroborating our theory nicely.

To clearly see the benefit of $\tau=\frac{1}{\sqrt{L}}$ over $\tau=\frac{1}{L}$, we conduct the classification task on the CIFAR10 dataset \cite{cifar} with the residual networks from \cite{he2016deep}. {A bit different from the model described in Section \ref{sec:model},  here one residual block is composed of two stacked convolution layers. We argue that our theoretical analysis still applies if treating the number of channels in convolution layer as width in Section \ref{sec:model}.} We plot the training/validation curves in Figure \ref{fig:appcifar}. We can see that with $\tau=\frac{1}{\sqrt{L}}$, both ResNet110 and ResNet1202 can be trained to good accuracy without BN. In contrast, with $\tau=\frac{1}{L}$, the performance of ResNet110 and ResNet1202 drops a lot. 

In the sequel, we use ``adding $\tau^*$" or ``$+\tau^*$" to denote residual network with $\tau=\frac{1}{\sqrt{L}}$. 
\begin{figure}
\centering
    \begin{minipage}{0.5\textwidth}
        \centering    
        \centerline{\includegraphics[width=\columnwidth]{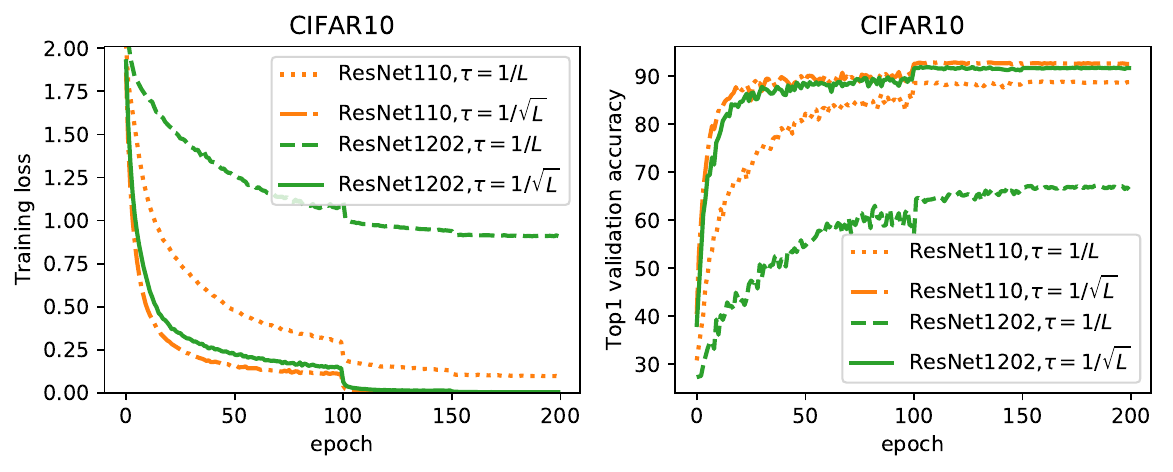}}
        \caption{Training/validation curves of ResNet110/1202 with $\tau=1/\sqrt{L}$ and $\tau=1/L$ for CIFAR10 classification task. We use the models in \cite{he2016deep} and remove all BN layers.}
        \label{fig:appcifar}
    \end{minipage}
    \hfill
    \begin{minipage}{0.45\textwidth}
        \centering
        \begin{subfigure}{.5\textwidth}
          \centering
          \includegraphics[width=1.0\textwidth]{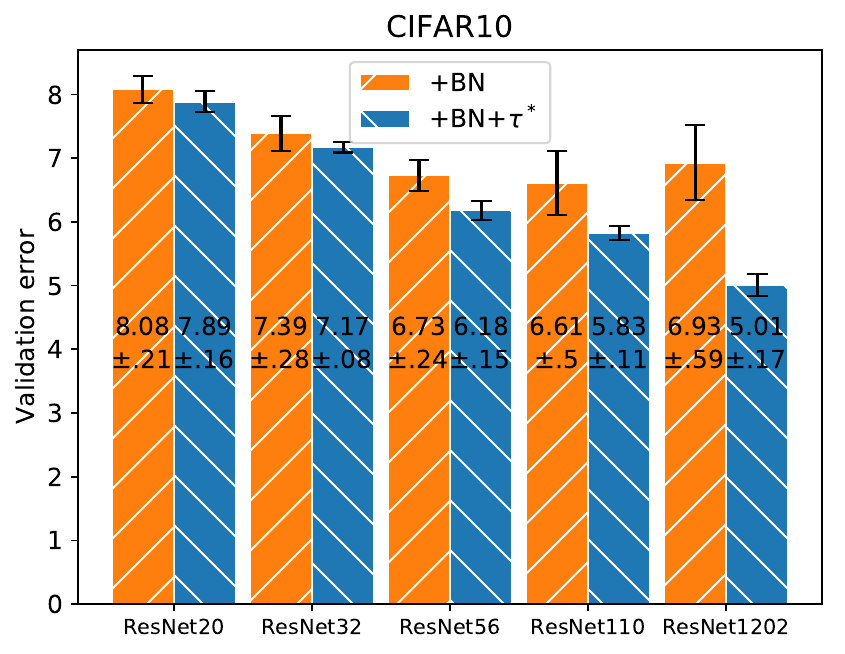}
          \label{fig:sub1}
        \end{subfigure}%
        \begin{subfigure}{.5\textwidth}
          \centering
          \includegraphics[width=1.0\textwidth]{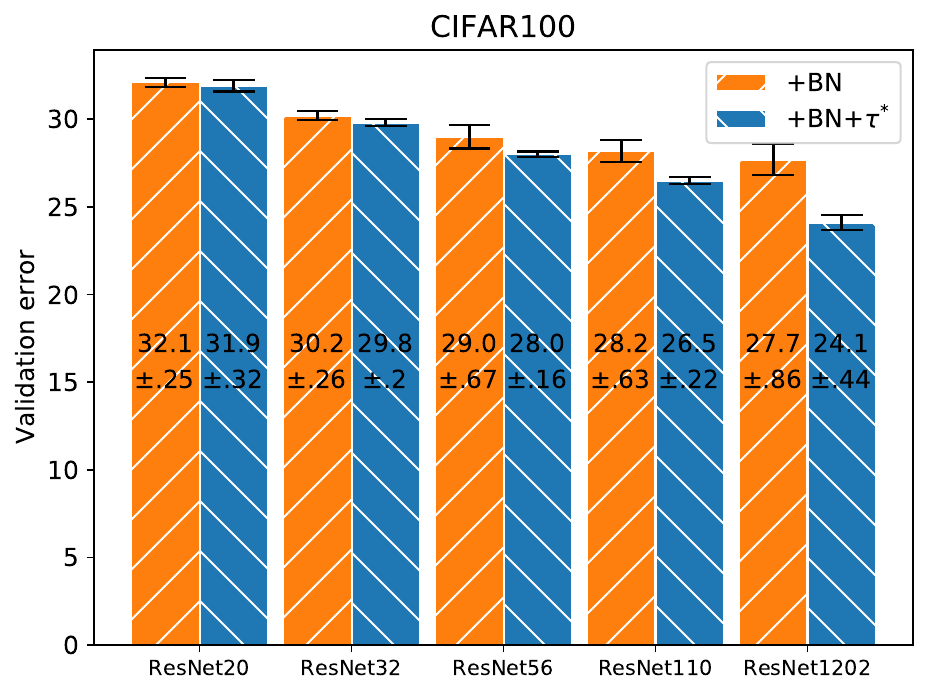}
          \label{fig:sub2}
        \end{subfigure}
        \caption{Validation error bar charts for CIFAR classification tasks. Numbers are average of 5 runs with standard deviations. 
        The deeper network, the larger benefit of $\tau^*$.}
        \label{fig:cifarbn}
    \end{minipage}
    \vspace{-2mm}
\end{figure}

\subsection{Comparison of adding $\tau^*$ and using \emph{Fixup} } \label{subsec:exp:learntau}

In this section we compare our approach of adding $\tau^*$ and the approach of using \emph{Fixup} for training residual networks without BN. We conduct the classification task on the CIFAR10 dataset. We use the residual models in \citep{he2016deep} with removing all the normalization layers.  For the approach of \emph{Fixup}, we use  the code from their github website with the same hyperparameter setting. We note that \emph{Fixup} has a learnable scalar with initial value $1$ on the output of the parametric branch in each residual block, which is equivalent to set $\tau=1$.  For our approach, we use the same model as \emph{Fixup} with setting $\tau=\frac{1}{\sqrt{L}}$ and using the Kaiming initialization instead of \emph{Fixup} initialization. 

The results are presented in Table~\ref{tbl:fixupvstau}. We can see that our approach achieves much  better performance than the \emph{Fixup} approach over all depths. Moreover, the \emph{Fixup} approach fails to converge 2 out of 5 runs for training ResNet1202 and hence the standard deviation is not presented in Table~\ref{tbl:fixupvstau}. 

\begin{table}
\parbox{.45\linewidth}{
\caption{Validation errors of ResNets+Fixup and ResNets+$\tau^*$ on CIFAR10. Numbers are average of 5 runs with standard deviations. }
\label{tbl:fixupvstau}
\begin{center}
\def\arraystretch{1.25}
\begin{tabular}{|c|c|c|}
\hline Model &  + Fixup  &  + $\tau^*$ \\
  \hline ResNet20 & 8.72($\pm$0.26) & \textbf{8.39}($\pm$0.11) \\ \hline
     ResNet32  & 7.99($\pm$0.24) & \textbf{7.68}($\pm$0.10)  \\\hline
   ResNet110 & 7.24($\pm$0.12) & \textbf{6.52}($\pm$0.20) \\\hline
   ResNet1202  & 7.83(N/A) & \textbf{6.08}($\pm$0.21)\\ \hline
\end{tabular}
\end{center}
}
\hfill
\parbox{.45\linewidth}{
\begin{center}
\caption{Top1 validation error on ImageNet.  The models are adapted from \cite{he2016deep}.}
\label{tbl:imgnet}
\vspace{2mm}
\begin{tabular}{ |c|c|c| } 
\hline
Model & Method &  Error \\ 
\hline
\multirow{2}{4em}{ResNet50} & +BN  & 23.6 \\ 
& +BN+$\tau^*$ & \textbf{22.7} \\ 
\hline
\multirow{2}{4em}{ResNet101} & +BN  & 22.0 \\ 
& +BN+$\tau^*$ & \textbf{21.4} \\ 
\hline
\multirow{2}{4em}{ResNet152} & +BN & 21.7 \\ 
& +BN+$\tau^*$ & \textbf{20.9} \\ 
\hline
\end{tabular}
\end{center}
}
\end{table}

\subsection{Add $\tau^*$ on top of normalization}

In this section, we empirically show that adding $\tau^*$ in the residual block with batch normalization can also help to achieve better performance. We conduct experiments on standard classification datasets: CIFAR10/100 and ImageNet. The baseline models are the residual networks in  \cite{he2016deep}. {We note that the residual block here is with batch normalization, which is discussed in Section \ref{subsec:comparison} but not precisely covered by the theoretical model (Section \ref{sec:model}).}  For our approach, the only modification is adding a fixed $\tau=\frac{1}{\sqrt{L}}$ at the output of each residual block (right before the residual addition).  We also tried to use learnable $\tau$ but did not observe gain, which may be due to that the BN layers have learnable scaling factors.  The validation errors  on CIFAR10/100 are illustrated in Figure~\ref{fig:cifarbn},  where all numbers are averaged over five runs. The performance of adding $\tau^*$ is much better than the baseline models and especially the benefit of adding $\tau^*$ becomes larger when the network is deeper. We note that one needs warm-up learning rate to successfully train ResNet1202+BN, while with $\tau^*$ we use the same learning rate schedule for all depths. 

As the models for ImageNet classification has different numbers of residual blocks in each stage, we choose $\tau^*=\frac{1}{\sqrt{L}}$ where $L$ is the average number of blocks over all stages. We take average instead of sum because there exists a BN layer on the output of each stage. All models are trained for $200$ epochs with learning rate divided by $10$ every $60$ epochs. The other hyperparameters are the same as in \cite{he2016deep}. Table~\ref{tbl:imgnet} shows the top 1 validation error results on ImageNet. We can see that just by adding $\tau^*$ on top of BN we can achieve significant performance gain.

\section{Conclusion}\label{sec:conclusion}
In this paper, we provide a non-asymptotic analysis on the  forward/backward stability for ResNet, which unveils that $\tau=1/\sqrt{L}$ is a sharp value in terms of characterizing the stability.  We also bridge  theoretical understanding and practical guide of ResNet structure. We  empirically verify the efficacy of adding $\tau$ for ResNet with/without batch normalization. As the residual block is also widely used in the \emph{Transformer} model \citep{vaswani2017attention}, it is interesting to study the effect of $\tau$ and layer normalization there.

{\small

}

\clearpage

\onecolumn
\appendix

\section{Useful Lemmas}\label{app:useful-lemmas}

First we list several useful bounds on Gaussian distribution.
 
 \begin{lemma}
\label{lem:erfcbound}Suppose $X\sim\cN(0,\sigma^{2})$, then 
\begin{equation}
\begin{aligned}
&\bbP\{|X|\le x\}
\ge 1-\exp\left(-\frac{x^{2}}{2\sigma^{2}}\right),\\
&\bbP\{|X|\le x\}
\le\sqrt{\frac{2}{\pi}}\frac{x}{\sigma}.
\end{aligned}
\end{equation}

\end{lemma}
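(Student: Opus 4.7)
My plan is to treat the two inequalities separately, since they require different techniques: the upper bound on $\bbP\{|X|\le x\}$ is immediate from crude density control, while the lower bound (equivalently, the tail estimate $\bbP\{|X|> x\}\le e^{-x^2/(2\sigma^2)}$ without the usual factor of $2$) needs a slightly more careful monotonicity argument than the off-the-shelf Chernoff bound.

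For the second inequality, I would bound the Gaussian density pointwise by its maximum: on the interval $[-x,x]$ one has $\tfrac{1}{\sigma\sqrt{2\pi}}\, e^{-t^2/(2\sigma^2)} \le \tfrac{1}{\sigma\sqrt{2\pi}}$, and integrating over $[-x,x]$ immediately yields $\bbP\{|X|\le x\} \le \tfrac{2x}{\sigma\sqrt{2\pi}} = \sqrt{2/\pi}\cdot x/\sigma$. This is a one-line calculation with no real difficulty.

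For the first inequality, I would first normalize by setting $Z=X/\sigma\sim\cN(0,1)$ and $u=x/\sigma$, reducing the claim to $\bbP\{|Z|>u\}\le e^{-u^2/2}$ for $u\ge 0$. Since ordinary Chernoff only gives $2 e^{-u^2/2}$, I would prove the sharper bound by the following calculus argument: define
\[
h(u) := \int_{u}^{\infty} e^{-t^2/2}\,dt \;-\; \sqrt{\tfrac{\pi}{2}}\, e^{-u^2/2},
\]
check $h(0)=0$ from $\int_{0}^{\infty} e^{-t^2/2}\,dt = \sqrt{\pi/2}$, and compute $h'(u) = e^{-u^2/2}\bigl(u\sqrt{\pi/2}-1\bigr)$, which is negative on $(0,\sqrt{2/\pi})$ and positive on $(\sqrt{2/\pi},\infty)$. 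I would then verify $h(u)\to 0^{-}$ as $u\to\infty$ via the standard Mills-ratio asymptotic $\int_{u}^{\infty} e^{-t^2/2}\,dt\sim e^{-u^2/2}/u$, which shows $h(u)\sim e^{-u^2/2}(1/u - \sqrt{\pi/2})$ for large $u$. Together these facts force $h$ to start at $0$, dip below zero on $(0,\sqrt{2/\pi})$, and return monotonically to $0$ from below on $(\sqrt{2/\pi},\infty)$; hence $h\le 0$ throughout $[0,\infty)$, which rearranges to $\bbP\{|Z|>u\}\le e^{-u^2/2}$ and therefore to the claim.

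The only real obstacle is precisely that the claimed tail estimate is strictly sharper than the generic sub-Gaussian Chernoff bound for $|Z|$ (no factor of two in front), so a pure moment-generating-function approach falls short. The elementary monotonicity argument above is the cleanest workaround; one could alternatively combine the integration-by-parts bound $\int_u^\infty e^{-t^2/2}\,dt \le \tfrac{1}{u} e^{-u^2/2}$ (valid for $u>0$) with a direct check on the small window $u\in[0,\sqrt{2/\pi}]$, but the single $h(u)$ argument handles both regimes simultaneously and is the route I would take.
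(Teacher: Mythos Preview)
Your proof is correct. The second inequality is indeed a one-liner from bounding the density by its value at the mode, and your calculus argument for the first inequality via $h(u)=\int_u^\infty e^{-t^2/2}\,dt-\sqrt{\pi/2}\,e^{-u^2/2}$ is clean and complete: $h(0)=0$, the derivative $h'(u)=e^{-u^2/2}(u\sqrt{\pi/2}-1)$ has a single sign change, and the Mills-ratio asymptotic forces $h(u)\to 0^-$, so $h\le 0$ on $[0,\infty)$ and the tail bound without the factor~$2$ follows.

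As for comparison with the paper: there is nothing to compare. The paper lists this lemma in its ``Useful Lemmas'' appendix as a standard Gaussian fact and gives no proof at all. Your write-up therefore supplies strictly more than the paper does. Your observation that the first inequality is genuinely sharper than the stock Chernoff/sub-Gaussian bound (which only yields $2e^{-u^2/2}$) is a nice point that the paper does not comment on.
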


Another bound is on the spectral norm of random matrix \citep[Corollary 5.35]{vershynin2012introduction}. 
\begin{lemma}\label{lem:spectralnorm}
Let $\bA\in\bbR^{N\times n}$, and entries
of $\bA$ are independent standard Gaussian random variables. Then for every $t\ge0$, with probability at least $1-\exp(-t^{2}/2)$ one has 
\begin{flalign}
s_{\max}(\bA)\le\sqrt{N}+\sqrt{n}+t,
\end{flalign}
where $s_{\max}(\bA)$ are the largest singular value of $\bA$.
\end{lemma}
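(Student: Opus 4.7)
The plan is to prove this standard Gaussian matrix bound by combining a sharp bound on the expectation $\mE[s_{\max}(\bA)]$ with a Gaussian concentration inequality for Lipschitz functions. This is the classical two-step route (Gordon + Borell) used to prove Corollary 5.35 in \cite{vershynin2012introduction}; I would not reinvent sharper arguments here, since the constants $\sqrt{N}+\sqrt{n}+t$ are already tight.

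\textbf{Step 1: Expectation bound.} First I would show $\mE[s_{\max}(\bA)] \le \sqrt{N}+\sqrt{n}$. The standard way is to write
\begin{equation*}
s_{\max}(\bA) \;=\; \sup_{u\in S^{N-1},\, v\in S^{n-1}} \langle u, \bA v\rangle,
\end{equation*}
which expresses $s_{\max}(\bA)$ as the supremum of a centered Gaussian process $X_{u,v}:=\langle u,\bA v\rangle$ indexed by $S^{N-1}\times S^{n-1}$. I would then invoke Gordon's inequality (a Slepian-type comparison) against the auxiliary Gaussian process $Y_{u,v} := \langle g, u\rangle + \langle h, v\rangle$, where $g\sim\cN(0,\bI_N)$ and $h\sim\cN(0,\bI_n)$ are independent. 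A direct computation shows that for all pairs the increment variances satisfy $\mE(X_{u,v}-X_{u',v'})^2 \le \mE(Y_{u,v}-Y_{u',v'})^2$, which by Gordon's min-max theorem yields
\begin{equation*}
\mE\sup_{u,v} X_{u,v} \;\le\; \mE\sup_{u,v} Y_{u,v} \;=\; \mE\|g\| + \mE\|h\| \;\le\; \sqrt{N}+\sqrt{n}.
\end{equation*}

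\textbf{Step 2: Concentration around the mean.} Next I would show that the map $\bA\mapsto s_{\max}(\bA)$ is $1$-Lipschitz with respect to the Frobenius norm: for any $\bA,\bA'$,
\begin{equation*}
|s_{\max}(\bA) - s_{\max}(\bA')| \;\le\; \|\bA-\bA'\|_2 \;\le\; \|\bA-\bA'\|_F.
\end{equation*}
Since the entries of $\bA$ are i.i.d.\ standard Gaussians, the Borell--Tsirelson--Ibragimov--Sudakov Gaussian concentration inequality gives, for every $t\ge 0$,
\begin{equation*}
\bbP\bigl\{ s_{\max}(\bA) \;\ge\; \mE[s_{\max}(\bA)] + t \bigr\} \;\le\; \exp(-t^2/2).
\end{equation*}
Combining with Step 1 yields $s_{\max}(\bA) \le \sqrt{N}+\sqrt{n}+t$ with probability at least $1-\exp(-t^2/2)$, which is the claim.

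\textbf{Expected obstacle.} The only nontrivial ingredient is Step 1: verifying the variance-comparison hypothesis of Gordon's inequality requires a short but careful computation to check $\mE(X_{u,v}-X_{u',v'})^2 \le \mE(Y_{u,v}-Y_{u',v'})^2$ uniformly in $(u,v),(u',v')\in S^{N-1}\times S^{n-1}$. If one wishes to avoid Gordon entirely, a fully self-contained alternative is an $\varepsilon$-net argument: pick $\varepsilon=1/4$-nets $\cN_N$ and $\cN_n$ of $S^{N-1}$ and $S^{n-1}$ with cardinalities at most $9^N$ and $9^n$, bound $s_{\max}(\bA) \le 2\sup_{u\in\cN_N,v\in\cN_n}\langle u,\bA v\rangle$, apply the standard Gaussian tail $\bbP\{\langle u,\bA v\rangle \ge s\} \le \exp(-s^2/2)$ for each fixed pair, and union bound. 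This route produces the same form of bound (with possibly slightly worse constants that can be tightened with a finer net). Either path delivers the stated inequality.
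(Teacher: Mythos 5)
Your proposal is correct: the paper does not prove this lemma but simply cites it as Corollary 5.35 of Vershynin's survey, and your two-step argument (Sudakov--Fernique/Gordon comparison against $Y_{u,v}=\langle g,u\rangle+\langle h,v\rangle$ to get $\mE\, s_{\max}(\bA)\le\sqrt{N}+\sqrt{n}$, then Borell--TIS concentration for the $1$-Lipschitz map $\bA\mapsto s_{\max}(\bA)$) is exactly the proof given in that reference. The variance comparison you flag as the one nontrivial check reduces to $2-2\langle u,u'\rangle\langle v,v'\rangle\le 4-2\langle u,u'\rangle-2\langle v,v'\rangle$, i.e.\ $2(1-\langle u,u'\rangle)(1-\langle v,v'\rangle)\ge 0$, so it goes through.
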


\section{Spectral Norm Bound at Initialization}\label{app:thm:initial-spectral-norm}

Next we present a spectral norm bound related to the forward process of ResNet with $\tau$. 

\spectralnormtight*

\begin{proof}

Without introducing ambiguity, we drop the superscript $^{(0)}$ for notation simplicity.  We first build the claim for one fixed sample $i\in [n]$ and drop the subscript $i,$ for convenience.  Let $g_l = h_{l-1}+\tau \bW_lh_{l-1}$ and $h_l = \bD_l g_{l}$ for $l=\{a,..., b\}$.  
We will show for a vector $h_{a-1}$ with $\|h_{a-1}\|=1$, we have $\|h_b\|\le 1+c$ with high probability, where
\begin{flalign}
h_b = \bD_b (\bI+\tau\bW_{b})\bD_{b-1}\cdots \bD_{a}(\bI+\tau\bW_{a}) h_{a-1}.
\end{flalign}
Then we have $\|g_l\|\ge \|h_l\|$ due to the assumption $\|\bD_l\|\le 1 $. Hence we have 
\begin{equation*}
\|h_{b}\|^2 = \frac{\|h_{b}\|^2}{\|h_{b-1}\|^2}  \cdots\frac{\|h_{a}\|^2}{\|h_{a-1}\|^2}\|h_{a-1}\|^2\le \frac{\|g_{b}\|^2}{\|h_{b-1}\|^2}  \cdots\frac{\|g_{a}\|^2}{\|h_{a-1}\|^2}\|h_{a-1}\|^2.
\end{equation*}
Taking logarithm at both side, we have
\begin{equation}
\log{\|h_{b}\|^2}\le \sum_{l=a}^{b}\log \Delta_{l},\quad \quad \text{where  } \Delta_{l} := \frac{\|g_{l}\|^2}{\|h_{l-1}\|^2}. 
\end{equation}
If letting $\tilde{h}_{l-1} := \frac{h_{l-1}}{\|h_{l-1}\|}$, then we obtain that{\small
\begin{equation*}
\begin{aligned}
\log{\Delta_{l}} &= \log\left(1 + 2\tau\left\langle \tilde{h}_{l-1},\bW_{l}\tilde{h}_{l-1} \right\rangle + \tau^{2}\|\bW_{l}\tilde{h}_{l-1}\|^{2}\right)\\
&\leq  2\tau\left\langle \tilde{h}_{l-1},\bW_{l}\tilde{h}_{l-1} \right\rangle + \tau^{2}\|\bW_{l}\tilde{h}_{l-1}\|^{2},
\end{aligned}
\end{equation*}}
where the inequality is due to the fact $\log (1+x) \le x$ for all $x>-1$. Let $\xi_{l} := 2\tau\left\langle \tilde{h}_{l-1},\bW_{l}\tilde{h}_{l-1} \right\rangle$ and $\zeta_{l}:= \tau^{2}\|\bW^{(0)}_{l}\tilde{h}_{l-1}\|^{2}$, then given $h_{l-1}$ we have $\xi_{l}\sim \cN\left(0, \frac{4\tau^2}{m}\right)$, $\zeta_{l}\sim \frac{\tau^{2}}{m}\chi_{m}^2$ because of the random initialization of $\bW_l$.  We see that
\begin{equation}
\begin{aligned}
\mathbb{P}\left(\sum\limits_{l=a}^{b}\log\Delta_{l}\geq c_1\right)&\leq \mathbb{P}\left(\sum\limits_{l=a}^{b}\xi_{l}\geq \frac{c_1}{2}\right) + \mathbb{P}\left(\sum\limits_{l=a}^{b}\zeta_{l}\geq \frac{c_1}{2}\right).
\end{aligned}
\end{equation}
Next we bound  the two terms on the right hand side one by one. For the first term we have
\begin{equation}
\begin{aligned}
\mathbb{P}\left(\sum\limits_{l=a}^{b}\xi_{l}\geq \frac{c_1}{2}\right)=\mathbb{P}\left(\exp\left(\lambda\sum\limits_{l=a}^{b}\xi_{l}\right)\geq \exp\left(\frac{\lambda c_1}{2}\right)\right) \leq \mathbb{E}\left[\exp\left(\lambda\sum\limits_{l=a}^{b}\xi_{l} - \frac{\lambda c_1}{2}\right)\right],
\end{aligned}
\end{equation}
where $\lambda$ is any positive number and the last inequality uses the Markov's inequality. Moreover,
\begin{equation}
\begin{aligned}
\mathbb{E}\left[\exp\left(\lambda\sum\limits_{l=a}^{b}\xi_{l}\right)\right] &= \mathbb{E}\left[\exp\left(\lambda\sum\limits_{l=a}^{b-1}\xi_{l}\right)\mathbb{E}\left[\exp\left(\lambda\xi_{b}\right)\right]\Big| \mathcal{F}_{b-1}\right]\\
&= \exp\left(\frac{4\tau^{2}\lambda^{2}}{m}\right)\mathbb{E}\left[\exp\left(\lambda\sum\limits_{l=a}^{b-1}\xi_{l}\right)\right]\\
&=\cdots=\exp\left(\frac{4\tau^{2}\lambda^{2}(b - a + 1)}{m}\right).
\end{aligned}
\end{equation} 
Hence we obtain
\begin{equation}
\mathbb{P}\left(\sum\limits_{l=a}^{b}\xi_{l}\geq \frac{c_1}{2}\right) \leq \exp\left(\frac{4m^2c_1^2\tau^{2}(b - a + 1)}{256m\tau^4L^2} -\frac{mc_1^{2}}{32\tau^{2}L}\right) = \exp\left(-\frac{mc_1^{2}}{64\tau^{2}L}\right),
\label{eq:gaussian-tail-bound}
\end{equation}
by choosing $\lambda = \frac{mc_1}{16\tau^{2}L}$ and using $b-a+1 \le L$. Due to the symmetry of $\sum_{l=a}^{b}\xi_{l}$, the conclusion can be generalized to the quantity $|\sum_{l=a}^{b}\xi_{l}|$  that $\mathbb{P}\left(\left|\sum\limits_{l=a}^{b}\xi_{l}\right|\geq \frac{c_1}{2}\right) \le 2\exp\left(-\frac{mc_1^{2}}{64\tau^{2}L}\right)$.

Then, for the second term, we follow the above procedure but for a $\chi_m^2$ variable. We note that the generate moment function of $\chi_{m}^{2}$ is $(1-2t)^{-m/2}$ for $t<1/2$. We will use an inequality that $(1-\frac{x}{m})^{-m}\le e^{x}$ for $x\ge 0$. 
By using the Markov's inequality, we first have for any $\lambda>0$,
\begin{equation}
\begin{aligned}
\mathbb{P}\left(\sum\limits_{l=a}^{b}\zeta_{l}\geq \frac{c_1}{2}\right)=\mathbb{P}\left(\exp\left(\lambda\sum\limits_{l=a}^{b}\zeta_{l}\right)\geq \exp\left(\frac{\lambda c_1}{2}\right)\right) \leq \mathbb{E}\left[\exp\left(\lambda\sum\limits_{l=a}^{b}\zeta_{l} - \frac{\lambda c_1}{2}\right)\right].
\end{aligned}
\end{equation}
Then we have 
\begin{equation}
\begin{aligned}
\mathbb{E}\left[\exp\left(\lambda\sum\limits_{l=a}^{b}\zeta_{l}\right)\right] &= \mathbb{E}\left[\exp\left(\lambda\sum\limits_{l=a}^{b-1}\zeta_{l}\right)\mathbb{E}\left[\exp\left(\lambda\zeta_{b}\right)\right]\Big| \mathcal{F}_{b-1}\right]\\
&=\left (1-\frac{\lambda \tau^2}{m/2}\right)^{-m/2}\mathbb{E}\left[\exp\left(\lambda\sum\limits_{l=a}^{b-1}\zeta_{l}\right)\right]\\
& \le \exp(\lambda\tau^2) \mathbb{E}\left[\exp\left(\lambda\sum\limits_{l=a}^{b-1}\zeta_{l}\right)\right]\\
&\le\cdots\le\exp\left(\lambda\tau^2(b - a + 1)\right).
\end{aligned}
\end{equation} 
Hence we obtain 
\begin{equation}
\begin{aligned}
\mathbb{P}\left(\sum\limits_{l=a}^{b}\zeta_{l}\geq \frac{c_1}{2}\right) \leq \exp\left(\lambda\tau^2(b - a + 1)- \frac{\lambda c_1}{2}\right) \le \exp\left(-\frac{mc_1^2}{2\tau^2L}\left(1-\frac{2\tau^2L}{c_1}\right)\right),
\label{eq:normal}
\end{aligned}
\end{equation}
by choosing $\lambda = \frac{mc_1}{\tau^2L}$ and using  $b-a+1 \le L$. If further setting $\tau$ such that $\tau^2 L\le \frac{c_1}{4}$, we have 
\begin{flalign}
\mathbb{P}\left(\sum\limits_{l=a}^{b}\zeta_{l}\geq \frac{c_1}{2}\right) \le \exp\left(-\frac{mc_1^2}{4\tau^2L}\right). \label{eq:chi-tail-bound}
\end{flalign}

Combining \eqref{eq:gaussian-tail-bound} and \eqref{eq:chi-tail-bound},  we obtain $\mathbb{P}\left(\sum\limits_{l=a}^{b}\log\Delta_{l}\geq c_1\right)\le 3\exp\left(-\frac{mc_1^2}{64\tau^2L}\right)$ under the condition $\tau^2 L\leq \frac{c_1}{4}$. Hence we have 
$\mathbb{P}\left(\|h_b\|\geq 1+c\right) \le \mathbb{P}\left(\sum\limits_{l=a}^{b}\log\Delta_{l}\geq 2\log(1+c)\right)\le 3\exp\left(-\frac{m\log^2(1+c)}{16\tau^2L}\right)$ under the condition that $\tau^2 L\leq  \frac{1}{2} \log (1+c)$. We next use $\epsilon$-net argument to prove the claim
for all $m$-dimensional vectors of $h_{a-1}$. Let $\cN_{\epsilon}$ be an $\varepsilon$-net over the unit ball in $\bbR^m$ with $\epsilon < 1$, then we have the cardinality $|\cN_{\epsilon}|\le (1+2/\epsilon)^m $. 
Taking the union bound over all vectors $h_{a-1}$ in the net $\cN_{\epsilon}$, we obtain 
\begin{flalign*}
 \bbP\left\{\max_{h_{a-1}\in \cN_{\epsilon}}\|h_b\|> 1+c \right\}&\le (1+2/\epsilon)^m \cdot 3\exp\left(-\frac{m\log^2(1+c)}{16\tau^2L}\right)\\ & 
= 3\exp\left(-m \left(\frac{\log^2(1+c)}{16\tau^2L} - \log (1+2/\epsilon)\right)\right) \le 3\exp\left(-m\right),
\end{flalign*}
where the last equality is obtained by choosing $\tau$ appropriately to make $\frac{\log^2 (1+c)}{16\tau^2L} -\log (1+2/\epsilon)>1$. Then we have the spectral norm bound 
\begin{flalign*}
 \left\|\bD_b\left(\bI+\tau\bW_{b}^{(0)}\right)\bD_{b-1}\cdots \bD_{a}\left(\bI+\tau\bW_{a}^{(0)}\right)\right\| \le (1-\epsilon)^{-1} \max_{h_{a-1}\in \cN_\epsilon} \|h_b\|.
\end{flalign*}
This is because of the following argument. For a matrix $\bM$, $v_i$ is a vector in the net which is closest to a unit vector $v$, then $\|\bM v\|\le \|\bM v_i \| + \|\bM (v-v_i)\|\le \|\bM v_i \| + \epsilon \|\bM\|$, 
and hence taking the supremum over $v$, one obtains
$(1-\epsilon) \|\bM \| \le \max_i \|\bM v_i\|$.

Finally taking a union bound over $a$ and $b$ with $1\le a\le b <L$ and a union bound over all samples $i\in [n]$,  we have the claimed result.

\end{proof}

\section{Bounded Forward/Backward Process}\label{app:sec:boundedforward}

\subsection{Proof at Initialization} \label{app:thm:hnorm-initialization}

\hnorminitialization*

\begin{proof}
We ignore the subscript $^{(0)}$ for simplicity. First we have
\begin{equation}
\|h_{i,l}\| = \|h_{i,0}\|\frac{\|h_{i,1}\|}{\|h_{i,0}\|}\cdots\frac{\|h_{i,l}\|}{\|h_{i,l-1}\|}.
\end{equation}
Then we see
\begin{equation}
\begin{aligned}
	\log{\|h_{i,l}\|^{2}} &= \log{\|h_{i,0}\|^{2}} +  \sum\limits_{a = 1}^{l}\log{\frac{\|h_{i, a}\|^{2}}{\|h_{i,a-1}\|^{2}}}= \log{\|h_{i,0}\|^{2}} + \sum\limits_{a = 1}^{l}\log\left(1 + \frac{\|h_{i, a}\|^{2} - \|h_{i,a-1}\|^{2}}{\|h_{i,a-1}\|^{2}}\right). \label{eq:hlnorm1}
\end{aligned}
\end{equation}
We introduce notation $\Delta_{a}:=\frac{\|h_{i,a}\|^{2} - \|h_{i,a-1}\|^{2}}{\|h_{i,a-1}\|^{2}}$. We next give a lower bound on $\Delta_{a}$. Let $S$ be the set $\{k: k\in[m] \text{ and } (h_{i,a-1})_{k} +\tau (\bW_{a}h_{i,a-1})_{k} >0\}$. We have that
\begin{align}
\Delta_{a} &= \frac{1}{\|h_{i,a-1}\|^{2}}\sum\limits_{k\in S}\left[(h_{i,a-1})_{k}^{2} + 2\tau (h_{i,a-1})_{k}(\bW_{a}h_{i,a-1})_{k} + (\tau\bW_{a}h_{i,a-1})_{k}^{2}\right] - \frac{1}{\|h_{i,a-1}\|^{2}}\sum\limits_{k=1}^{m}(h_{i,a-1})_{k}^{2} \nn\\
		   &= -\frac{1}{\|h_{i,a-1}\|^{2}}\sum\limits_{k\notin S}(h_{i,a-1})_{k}^{2} + \frac{1}{\|h_{i,a-1}\|^{2}}\sum\limits_{k\in S}\tau^{2}(\bW_{a}h_{i,a-1})_{k}^{2} + \frac{2}{\|h_{i,a-1}\|^{2}}\sum\limits_{k\in S}\tau (h_{i,a-1})_{k}(\bW_{a}h_{i,a-1})_{k}\nn\\
		   &\geq -\frac{1}{\|h_{i,a-1}\|^{2}}\sum\limits_{k=1}^{m}(\tau\bW_{a}h_{i,a-1})^{2} + \frac{2}{\|h_{i,a-1}\|^{2}}\tau\sum\limits_{k=1}^{m}(h_{i,a-1})_{k}(\bW_{a}h_{i,a-1})_{k}\nn\\
		   &= -\frac{\|\tau\bW_{a}h_{i,a-1}\|^{2}}{\|h_{i,a-1}\|^{2}} + \frac{2\tau\left\langle h_{i,a-1},\bW_{a}h_{i,a-1} \right\rangle}{\|h_{i,a-1}\|^{2}}, \label{eq:Delta-lowerbound}
\end{align}
 where the inequality is due to the fact that for $k\notin S$,  $|(h_{i,a-1})_{k}|<|(\tau\bW_{a}h_{i,a-1})_{k}|$ and $(h_{i,a-1})_{k}(\bW_{a}h_{i,a-1})_{k}\le 0$. Let $\xi_{a}:=\frac{2\tau\left\langle h_{i,a-1},\bW_{a}h_{i,a-1} \right\rangle}{\|h_{i,a-1}\|^{2}}$ and $\zeta_{a}:=\frac{\|\tau\bW_{a}h_{i,a-1}\|^{2}}{\|h_{i,a-1}\|^{2}}$, then $\Delta_{a} \ge \xi_{a}- \zeta_{a}$. We note that given $h_{i,a-1}$, $\xi_{a}\sim \cN\left(0, \frac{4\tau^2}{m}\right)$ and $\zeta_{a}\sim \frac{\tau^{2}}{m}\chi_{m}^2$.  We use a tail bound for a $\chi^2_m$ variable $X$ (see Lemma 1 in \cite{laurent2000adaptive})
\begin{equation}
\mathbb{P}\left(|X-m|\geq u\right)\leq e^{-\frac{u^{2}}{4m}}.
\end{equation}
By applying the tail bound on Gaussian and Chi-square variables, for a constant $c_0$ such that $4\tau^2\le c_0 $ we have 
\begin{equation}
\begin{aligned}
\mathbb{P}\left(\Delta_{a} < -c_0\right) &= \mathbb{P}\left(\Delta_{a} < -c_0\text{ and } \xi_{a} < -\frac{c_0}{2}\right) +  \mathbb{P}\left(\Delta_{a} < -c_0\text{ and } \xi_{a} \ge -\frac{c_0}{2}\right)\\
&\leq \mathbb{P}\left(\xi_{a} < -\frac{c_0}{2}\right) + \mathbb{P}\left(\zeta_{a}>\frac{c_0}{2}\right)\\
& =\frac{1}{2}\exp\left(-\frac{mc_0^2}{32\tau^2}\right) + \exp\left(-\frac{mc_0^2}{16\tau^4}\right)\\
&<\exp\left(-\frac{mc_0^2}{32\tau^2}\right).
\end{aligned}
\end{equation}

Thus, by choosing $c_0 = 0.5$, we have $\mathbb{P}\left(\Delta_a \ge -0.5, \forall a\in [L-1]\right) \ge 1- L\exp\left(-\frac{m}{128\tau^2}\right)$. On the event $\{\Delta_a \ge -0.5, \forall a \in [L-1]\}$, we can use the relation $\log(1 + x)\geq x - x^{2}$ for $x\ge-0.5$ and have
 \begin{equation}
\begin{aligned}
\eqref{eq:hlnorm1}\geq \log{\|h_{i,0}\|^{2}} + \sum\limits_{a = 1}^{l}\left(\Delta_{a} - \Delta_{a}^{2}\right).
\end{aligned}
\end{equation}

 Due to \eqref{eq:gaussian-tail-bound} and \eqref{eq:chi-tail-bound}, we have for any $c_1>0$, and $\tau^2L\leq c_1/4$,
\begin{equation}\label{eq:sum}
\begin{aligned}
&\mathbb{P}\left(\sum\limits_{l=a}^{b}\xi_{l}\geq \frac{c_1}{2}\right)\leq \exp\left(-\frac{mc_1^{2}}{64\tau^{2}L}\right),\; \mathbb{P}\left(\sum\limits_{l=a}^{b}\xi_{l}<- \frac{c_1}{2}\right)\leq \exp\left(-\frac{mc_1^{2}}{64\tau^{2}L}\right),\\
&\mathbb{P}\left(\sum\limits_{l=a}^{b}\zeta_{l}\geq \frac{c_1}{2}\right) \leq \exp\left(-\frac{mc_1^2}{4\tau^2L}\right).
\end{aligned}
\end{equation}
Thus we have for any $c_1>0$, and $\tau^2L\leq c_1/4$,
 \begin{equation}\label{eq:prob diff}
\begin{aligned}
\mathbb{P}\left(\sum_{a=1}^{l}\Delta_{a}\leq -c_1\right) &= \mathbb{P}\left(\sum_{a=1}^{l}\Delta_{a}\leq -c_1, \sum_{a=1}^{l}\xi_{a}\geq -\frac{c_1}{2}\right) + \mathbb{P}\left(\sum_{a=1}^{l}\Delta_{a}\leq -c_1, \sum_{a=1}^{l}\xi_{a}\leq -\frac{c_1}{2}\right)\\
&\leq \mathbb{P}\left( \sum_{a=1}^{l}\zeta_{a}\geq \frac{c_1}{2}\right) + \mathbb{P}\left(\sum_{a=1}^{l}\xi_{a}\leq -\frac{c_1}{2}\right) =2\exp\left(-\frac{mc_1^{2}}{64\tau^{2}L}\right).
\end{aligned}
\end{equation}
We can derive a similar result that $\mathbb{P}\left(\sum_{a=1}^{l}\Delta_{a}\geq c_1\right)\leq \mathbb{P}\left(\sum_{a=1}^{l}\xi_{a}\ge c_1\right)  \le \exp\left(-\frac{mc_1^{2}}{16\tau^{2}L}\right)$.
Let $a = b$ in \eqref{eq:sum}, we have obtained that for a single $\Delta_{a}$, for a constant $c_1$ such that $4\tau^2\le c_1 $,
\begin{equation}
    \mathbb{P}\left(\left|\Delta_{a}\right|\geq c_1\right)\leq 2\exp\left(-\frac{mc_1^{2}}{32\tau^2}\right).
\end{equation}
In addition, we see that for any $16\tau^4L\le c_1 $
\begin{equation}
    \mathbb{P}\left(\sum\limits_{a=1}^{l}\Delta^{2}_{a}\geq c_1\right)\leq \sum\limits_{a=1}^{l}\mathbb{P}\left(\Delta_{a}^{2}\geq \frac{c_1}{l}\right) = \sum\limits_{a=1}^{l}\mathbb{P}\left(|\Delta_{a}|\geq \sqrt{\frac{c_1}{l}}\right)\leq 2l \exp\left(-\frac{mc_1}{32}\right).
\end{equation}
Thus, similar to the \eqref{eq:prob diff}, we obtain for any $c_1>0$ and $8\tau^2L< c_1$,
\begin{equation}
\begin{aligned}
    \mathbb{P}\left(\sum\limits_{a=1}^{l}\left(\Delta_{a} - \Delta^{2}_{a}\right)\leq -c_1\right) &\leq \mathbb{P}\left(\sum_{a=1}^{l}\Delta_{a}\leq -\frac{c_1}{2}\right) + \mathbb{P}\left(\sum\limits_{a=1}^{l}\Delta^{2}_{a}\geq \frac{c_1}{2}\right) \\
    &\le 2\exp\left(-\frac{mc_1^{2}}{256\tau^{2}L}\right)+ 2(L-1)\exp\left(-\frac{mc_1}{64}\right)\\
    &\le 2L\exp\left(-\frac{mc_1}{64}\right)
\end{aligned}
\end{equation}
Thus  on the event of $\{\Delta_a \ge -0.5, \forall a\in [L-1]\}$, we have for any $c_1>0$ and $8\tau^2L< c_1$,
\begin{equation}
    \mathbb{P}\left(\log{\|h_{i,l}\|^{2}} \leq -c_1 \right)\leq\mathbb{P}\left(\log{\|h_{i,0}\|^{2}} + \sum\limits_{a = 1}^{l}\left(\Delta_{a} - \Delta_{a}^{2}\right)\leq -c_1\right)\leq 2L\exp\left(-\frac{mc_1}{64}\right).
\end{equation}
Then we get the conclusion $\mathbb{P}\left(\|h_{i,l}\|< 1-c\right) = \mathbb{P}\left(\log{\|h_{i,l}\|^{2}} \leq -2\log (1-c)^{-1}\right)\le  2L\exp\left(-\frac{1}{32}m\log(1-c)^{-1}\right)$. Taking union bound over $i\in[n]$ and $l\in[L-1]$, we get the claimed result with probability $1-2nL^2\exp\left(-\frac{1}{32}m\log(1-c)^{-1}\right)$ under the condition $\tau^2L \le \frac{1}{4}\log (1-c)^{-1}$. 
\end{proof}

\subsection{Lemmas and Proofs after Perturbation}\label{app:sec:perturbed-stability}
We use $\overrightarrow{\bW}^{(0)}$ to denote the weight matrices at initialization and use $\overrightarrow{\bW}'$ to denote the perturbation matrices. Let  $\overrightarrow{\bW} = \overrightarrow{\bW}^{(0)} + \overrightarrow{\bW}'$. We define $h_{i,l}^{(0)} = \phi((\bI+\tau\bW_l^{(0)})h_{i,l-1}^{(0)})$ and $h_{i,l} = \phi((\bI+\tau\bW_l)h_{i,l-1})$ for $l\in[L-1]$, and $h_{i,L}^{(0)} = \phi(\bW_L^{(0)}h_{i,L-1}^{(0)})$ and $h_{i,L} = \phi(\bW_L h_{i,L-1})$. Furthermore, let $h'_{i,l} := h_{i,l}- h_{i,l}^{(0)}$ and $\bD'_{i,l} := \bD_{i,l} - \bD_{i,l}^{(0)}$. We note that $\|\cdot\|_0$ is the number of nonzero entries in $\cdot$. In the sequel, we will use notation $O$ and $\Omega$ to simplify the presentation. Then the spectral norm bound after perturbation is as follows. 

\begin{lemma}\label{lem:perturbed-spectral-norm}
Suppose that $\overrightarrow{\bW}^{(0)}$, $\bA$ are randomly generated as in the initialization step, and $\bW'_{1},\dots,\bW'_{L-1}\in\bbR^{m\times m}$ are perturbation matrices with $\|\bW'_l\|<\tau \omega$ for all $l\in[L-1]$ for some $\omega<1$. Suppose $\bD_{i,0},\dots,\bD_{i,L}$  are  diagonal matrices  representing the activation status of sample $i$. If $\tau^2L \le O(1)$, then with probability at least $1-3nL^2\cdot\exp(-\Omega(m))$ over the initialization randomness we have
\begin{flalign}
\|(\bI+\tau\bW_{b}^{(0)}+\tau\bW'_{b})\bD_{i,b-1}\cdots\bD_{i,a}(\bI+\tau\bW_{a}^{(0)}+\tau\bW'_{a})\|\le O(1).
\end{flalign}
\end{lemma}
\begin{proof}
This proof is similar to the proof of Theorem \ref{thm:initial-spectral-norm}.  We first build the claim for one fixed sample $i\in [n]$ and drop the subscript $i,$ for convenience.  
We will show for a vector $h_{a-1}$ with $\|h_{a-1}\|=1$, we have $\|h_b\|\le 1+c$ with high probability, where
\begin{flalign}
h_b = \bD_b (\bI+\tau\bW^{(0)}_{b}+\tau \bW'_{b})\bD_{b-1}\cdots \bD_{a}(\bI+\tau\bW^{(0)}_{a}+\tau \bW'_{a}) h_{a-1}.
\end{flalign}
Let $g_l = h_{l-1}+\tau \bW^{(0)}_lh_{l-1} +\tau \bW'_lh_{l-1}$ and $h_l = \bD_l g_{l}$ for $l=\{a,..., b\}$.   Then we have $\|g_l\|\ge \|h_l\|$ due to the fact $\|\bD_l\|\le 1 $. Hence we have 
\begin{equation*}
\|h_{b}\|^2 = \frac{\|h_{b}\|^2}{\|h_{b-1}\|^2}  \cdots\frac{\|h_{a}\|^2}{\|h_{a-1}\|^2}\|h_{a-1}\|^2\le \frac{\|g_{b}\|^2}{\|h_{b-1}\|^2}  \cdots\frac{\|g_{a}\|^2}{\|h_{a-1}\|^2}\|h_{a-1}\|^2.
\end{equation*}
Taking logarithm at both side, we have
\begin{equation}
\log{\|h_{b}\|^2}\le \sum_{l=a}^{b}\log \Delta_{l},\quad \quad \text{where  } \Delta_{l} := \frac{\|g_{l}\|^2}{\|h_{l-1}\|^2}. 
\end{equation}
If letting $\tilde{h}_{l-1} := \frac{h_{l-1}}{\|h_{l-1}\|}$, then we obtain that{\small
\begin{equation*}
\begin{aligned}
\log{\Delta_{l}} &= \log\left(1 + 2\tau\left\langle \tilde{h}_{l-1},\bW^{(0)}_{l}\tilde{h}_{l-1} \right\rangle + \tau^{2}\|\bW^{(0)}_{l}\tilde{h}_{l-1}\|^{2} +  2\tau\left\langle (\bI + \tau\bW^{(0)}_{l})
 \tilde{h}_{l-1}, \bW'_{l}\tilde{h}_{l-1} \right\rangle + \tau^{2}\|\bW'_{l}\tilde{h}_{l-1}\|^{2}\right)\\
&\leq  2\tau\left\langle \tilde{h}_{l-1},\bW^{(0)}_{l}\tilde{h}_{l-1} \right\rangle + \tau^{2}\|\bW^{(0)}_{l}\tilde{h}_{l-1}\|^{2} + 2\tau\left\langle (\bI + \tau\bW^{(0)}_{l})
 \tilde{h}_{l-1}, \bW'_{l}\tilde{h}_{l-1} \right\rangle + \tau^{2}\|\bW'_{l}\tilde{h}_{l-1}\|^{2},
\end{aligned}
\end{equation*}}
where the inequality is due to the fact $\log (1+x) \le x$ for all $x>-1$. 
We can bound the sum over layers of the first two terms as in the proof of Theorem \ref{thm:initial-spectral-norm}. Next we control the last two terms related with $\bW'_l$, on a high probability event $\{\|\bW^{(0)}_l \|\le 4, \text{ for all } l \in [L-1]\}$
\begin{flalign}
&\sum_{l=a}^{b} 2\tau\left\langle (\bI + \tau\bW^{(0)}_{l})
 \tilde{h}_{l-1}, \bW'_{l}\tilde{h}_{l-1} \right\rangle  \le  \sum_{l=a}^{b}2 \tau \|\bI + \tau \bW^{(0)}_l \|\|\bW'_l\| \|\tilde{h}_{l-1}\|^2\le  \sum_{l=a}^{b}2 \tau^2 \omega(1+4\tau),\nn \\
&\sum_{l=a}^{b} \tau^{2}\|\bW'_{l}\tilde{h}_{l-1}\|^{2}  \le  \sum_{l=a}^{b}2 \tau^4 \omega^2.  \nn
\end{flalign}
Hence given $\tau^2 L \le c_1/4$  as in proof of Theorem \ref{thm:initial-spectral-norm} and $\omega$ being a small constant, the above two sum are well controlled. We can obtain a  spectral norm bound as claimed. 
Here the theorem is built for one $\bW'_l$.  At the end of the whole proof, we will see the number of iterations is $\Omega(n^2)$. If we take union bound over all the $\bW'_l$ s running into in the optimization trajectory, the overall probability is still as high as $1 - \Omega(n^3 L^2)\exp(-\Omega(m))$.
\end{proof}

We also have small changes on the output vector of each layer after perturbation. 
\begin{lemma}\label{lem:perturbed-hnorm}
Suppose that $\omega \le O(1)$ and $\tau^2L\le O(1)$. If $\|\bW_{L}'\|\le\omega$ and $\|\bW_{l}'\|\le\tau\omega$ for $l\in[L-1]$, then with probability at least $1-\exp(-\Omega(m\omega^{\nicefrac{2}{3}}))$, the following bounds on $h'_{i,l}$ and $\bD'_{i,l}$ hold for all $i\in[n]$ and all $l\in[L-1]$,

{\small
\begin{flalign}
 & \|h'_{i,l}\|\le O(\tau^2 L\omega),\;\;\|\bD'_{i,l}\|_{0}\le O\left( m(\omega\tau{L})^{\nicefrac{2}{3}}\right),\;\; \|h'_{i,L}\|\le O(\omega),\; \; \|\bD'_{i,L}\|_{0}\le O\left(m\omega^{\nicefrac{2}{3}}\right).\nn
\end{flalign}}
\end{lemma}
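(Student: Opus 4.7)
\emph{Plan.} My approach is to control $h'_{i,l}$ and $\|\bD'_{i,l}\|_0$ by a joint induction on the layer $l$, separating at each layer the direct perturbation caused by $\bW_l'$ from the indirect perturbation caused by the ReLU sign flips $\bD'_{i,l}$. First I would derive a layer-wise recursion. Expanding the ResNet definition, at a residual layer one obtains
\[
h'_{i,l} \;=\; \bD_{i,l}(\bI+\tau\bW_l)\,h'_{i,l-1} \;+\; \tau\,\bD_{i,l}\bW_l'\,\tilde h_{i,l-1} \;+\; \bD'_{i,l}\,\tilde g_{i,l},
\]
with an analogous unscaled identity at the top layer. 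Unrolling expresses $h'_{i,l}$ as a telescoping sum over $k\le l$ of propagated ``sources'' $\tau\bW_k'\tilde h_{i,k-1}$ and $\bD'_{i,k}\tilde g_{i,k}$, where every propagator $\bD_{i,l}(\bI+\tau\bW_l)\cdots\bD_{i,k+1}(\bI+\tau\bW_{k+1})$ has spectral norm $O(1)$ by Lemma \ref{lem:perturbed-spectral-norm} and $\|\tilde h_{i,k-1}\|=O(1)$ by Lemma \ref{lem:hnorm-initialization}.

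From this decomposition the norm bounds fall out. Each direct weight-perturbation term at a residual layer contributes at most $O(\tau\cdot\|\bW_k'\|_2\cdot\|\tilde h_{i,k-1}\|)=O(\tau^2\omega)$; summing the $L-1$ such terms and amplifying by the $O(1)$ propagator gives $\|h'_{i,l}\|=O(\tau^2 L\omega)$ for $l\in[L-1]$. For $l=L$, the top-layer source $\bW_L'\tilde h_{i,L-1}$ alone contributes $O(\omega)$, and this dominates the propagated residual contribution $O(\tau^2 L\omega)\le O(\omega)$ by the hypothesis $\tau^2 L\le 1$. For the sparsity, observe that $[\bD'_{i,l}]_{k,k}\ne 0$ only if $|(g_{i,l})_k|\le|(g'_{i,l})_k|$. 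Splitting the index set at a threshold $t$, the ``large'' bin $\{k:|(g'_{i,l})_k|\ge t\}$ has at most $\|g'_{i,l}\|^2/t^2$ elements by Markov and the norm step above, while for the ``small'' bin $\{k:|(g_{i,l})_k|\le t\}$ one conditions on $\bA$ and $\bW_{1:l-1}$ so that $(g_{i,l})_k$ is Gaussian in the remaining randomness of $\bW_l$, with standard deviation $\Theta(\tau/\sqrt{m})$ at residual layers and $\Theta(1/\sqrt{m})$ at the top. Lemma \ref{lem:erfcbound} bounds the per-coordinate probability, and a Chernoff argument lifts this to a small-bin count of $O(m^{3/2}t/\tau)$ (resp.\ $O(m^{3/2}t)$) with probability at least $1-\exp(-\Omega(m^{3/2}t))$. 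Balancing the two bin sizes, $t^3\asymp\tau\|g'_{i,l}\|^2/m^{3/2}$ at residual layers and $t^3\asymp\|g'_{i,L}\|^2/m^{3/2}$ at the top, yields exactly the claimed $O(m(\tau L\omega)^{2/3})$ and $O(m\omega^{2/3})$ bounds, with the binding failure probability $\exp(-\Omega(m\omega^{2/3}))$ coming from the top-layer balance.

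The main obstacle is the bootstrap coupling between $\|h'_{i,l-1}\|$ and $\|\bD'_{i,l}\|_0$: the sign-flip source $\bD'_{i,l}\tilde g_{i,l}$ appearing in the recursion depends on $\|\bD'_{i,l}\|_0$, which in turn depends through $g'_{i,l}$ on the earlier $\|h'_{i,k}\|$. I would close this by a joint induction that advances both bounds at layer $l$ simultaneously, verifying that $\|\bD'_{i,l}\tilde g_{i,l}\|$ is dominated by the direct source $\tau\|\bW_l'\|_2\|\tilde h_{i,l-1}\|$. Concretely one bounds $\|\bD'_{i,l}\tilde g_{i,l}\|\le\sqrt{\|\bD'_{i,l}\|_0}\cdot\max_k|(\tilde g_{i,l})_k|$, where $\max_k|(\tilde g_{i,l})_k|=O(\sqrt{\log m/m})$ by a standard Gaussian maximum estimate, so that the inductive constants close under $\tau^2 L\le 1$. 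A secondary subtlety is bookkeeping the conditional independence required to justify the Gaussian anti-concentration step without a circular dependence on the fresh randomness of $\bW_l$; this is handled by conditioning on $(\bA,\bW_{1:l-1})$ before applying Lemma \ref{lem:erfcbound}.
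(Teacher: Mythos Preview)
Your sparsity argument (threshold splitting with Gaussian anti-concentration and Chernoff, then balancing the two bins) is exactly what the paper does, so that part is fine \emph{once the norm bound $\|h'_{i,l}\|\le O(\tau^2 L\omega)$ is in hand}. The gap is in the norm bound itself.

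Your recursion
\[
h'_{i,l}=\bD_{i,l}^{(0)}(\bI+\tau\bW_l^{(0)})\,h'_{i,l-1}+\tau\bD_{i,l}^{(0)}\bW'_l\tilde h_{i,l-1}+\bD'_{i,l}\tilde g_{i,l}
\]
is correct, but the sign-flip source $\bD'_{i,l}\tilde g_{i,l}$ is \emph{not} dominated by the direct source. Plugging in your own estimates, $\sqrt{\|\bD'_{i,l}\|_0}\cdot\max_k|(\tilde g_{i,l})_k|$ is of order $(\tau L\omega)^{1/3}\sqrt{\log m}$, while the direct source is $O(\tau^2\omega)$. For $\tau\sim 1/\sqrt L$ the former is at least $L^{1/6}\omega^{1/3}$ and the latter is at most $\omega/L$, so the claimed domination fails by a polynomial factor in $L$; summed over $L$ layers the sign-flip contribution swamps the target $O(\tau^2 L\omega)$ and the joint induction does not close. (A secondary issue: $(\tilde g_{i,l})_k$ contains the term $(\tilde h_{i,l-1})_k$, whose maximum over $k$ accumulates an additional $\sqrt L$ factor under $\tau\sim 1/\sqrt L$, so even the ``Gaussian max'' step is optimistic.)

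The paper sidesteps the bootstrap entirely by using the $1$-Lipschitzness of $\phi$ in the form of Proposition~\ref{prop:8.3}: there is a diagonal $\bD''_{i,l}$ with entries in $[-1,1]$ such that $h'_{i,l}=\bD''_{i,l}\,g'_{i,l}$. This yields
\[
h'_{i,l}=\bD''_{i,l}\big((\bI+\tau\bW_l^{(0)}+\tau\bW'_l)\,h'_{i,l-1}+\tau\bW'_l h^{(0)}_{i,l-1}\big),
\]
with a \emph{single} source $\tau\bW'_l h^{(0)}_{i,l-1}$ per layer and a propagator built from the $\bD''$ matrices and the perturbed weights. Unrolling gives
\[
h'_{i,l}=\sum_{a=1}^{l}\tau\,\bD''_{i,l}(\bI+\tau\bW_l+\tau\bW'_l)\cdots\bD''_{i,a+1}(\bI+\tau\bW_{a+1}+\tau\bW'_{a+1})\bD''_{i,a}\bW'_a h^{(0)}_{i,a-1},
\]
and Lemma~\ref{lem:perturbed-spectral-norm} bounds each propagator by $O(1)$ (note $\bD''_{i,l}$ depends only on $\bW^{(0)}_{1:l}$ and $\bW'_{1:l}$, so the independence hypothesis of that lemma is met). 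Each source has norm $O(\tau^2\omega)$, and summing $l\le L$ of them gives $\|h'_{i,l}\|\le O(\tau^2 L\omega)$ directly, with no coupling to the sparsity. The sparsity of $\bD'_{i,l}$ is then proved \emph{afterwards}, as a separate step, by the threshold argument you already describe.
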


\begin{proof}
Fixing $i$ and ignoring the subscript in $i$, by Claim 8.2 in \citet{allen2018convergence}, for $l\in[L-1]$, there exists $\bD''_{l}$ such that $|(\bD''_{l})_{k,k}|\le1$ and  
\begin{flalign}
h'_{l} & =\bD''_{l}\left((\bI+\tau\bW_{l}^{(0)}+\tau\bW'_{l})h_{l-1}-(\bI+\tau\bW_{l}^{(0)})h_{l-1}^{(0)}\right)\nn\\
 & =\bD''_{l}\left((\bI+\tau\bW_{l}^{(0)}+\tau\bW'_{l})h'_{l-1}+\tau\bW'_{l}h_{l-1}^{(0)}\right)\nn\\
 & =\bD''_{l}(\bI+\tau\bW_{l}^{(0)}+\tau\bW'_{l})\bD''_{l-1}(\bI+\tau\bW_{l-1}+\tau\bW'_{l-1})h'_{l-2}\nn\\
 & \quad+\tau\bD''_{l}(\bI+\tau\bW_{l}^{(0)}+\tau\bW'_{l})\bD''_{l-1}\bW'_{l-1}h_{l-2}^{(0)}+\tau\bD''_{l}\bW'_{l}h_{l-1}^{(0)}\nn\\
 & =\cdots\nn\\
 & =\sum_{a=1}^{l}\tau\bD''_{l}(\bI+\tau\bW_{l}^{(0)}+\tau\bW'_{l})\cdots\bD''_{a+1}(\bI+\tau\bW_{a+1}+\tau\bW'_{a+1})\bD''_{a}\bW'_{a}h_{a}^{(0)}.
\end{flalign}
We claim that 
\begin{flalign}
\|h'_{l}\|\le O(\tau^2L\omega)
\end{flalign}
due to the fact $\|\bD''_{l}\|\le1$ and
the assumption $\|\bW'_{l}\|\le\tau\omega$ for $l\in[L-1]$.
This implies that $\|h'_{i,l}\|,\|g'_{i,l}\|\le O(\tau^2L\omega)$ for
all $l\in[L-1]$ and for all $i$ with probability at least $1-O(nL)\cdot\exp(-\Omega(m))$.
One step further, we have $\|h'_{L}\|,\|g'_{L}\|\le O(\omega)$.

As for the sparsity $\|\bD'_{l}\|_{0}$,  we have $\|\bD'_{l}\|_{0}\le O(m(\omega\tau L)^{\nicefrac{2}{3}})$ for every $l=[L-1]$ and $\|\bD'_{L}\|_{0}\le O(m\omega^{\nicefrac{2}{3}})$. 

The argument is as follows (adapt from the Claim 5.3 in \citet{allen2018convergence}).

We first study the case  $l\in[L-1]$. We see that if $(\bD'_{l})_{j,j}\neq0$
one must have $|(g'_{l})_{j}|>|(g_{l}^{(0)})_{j}|$.

We note that $(g_{l}^{(0)})_{j}=(h_{l-1}^{(0)}+\tau\bW_{l}^{(0)}h_{l-1}^{(0)})_{j}\sim\cN\left((h_{l-1}^{(0)})_{j},\frac{\tau^{2}\|h_{l-1}^{(0)}\|^{2}}{m}\right)$.
Let $\xi\le\frac{1}{\sqrt{m}}$be a parameter to be chosen later.
Let $S_{1}\subseteq[m]$ be a index set satisfying $S_{1}:=\{j:|(g_{l}^{(0)})_{j}|\le\xi\tau\}$.
We have $\bbP\{|(g_{l}^{(0)})_{j}|\le\xi\tau\}\le O(\xi\sqrt{m})$ for
each $j\in[m]$. By Chernoff bound, with probability at least $1-\exp(-\Omega(m^{3/2}\xi))$ we have
\begin{flalign*}
|S_{1}|\le O(\xi m^{3/2}).
\end{flalign*}

Let $S_{2}:=\{j:j\notin S_{1},\ \text{and }(\bD'_{l})_{j,j}\neq0\}$.
Then for $j\in S_{2}$, we have $|(g'_{l})_{j}|>\xi\tau$. As we have
proved that $\|g'_{l}\|\le O(\tau^2L\omega)$, we have

\begin{flalign*}
|S_{2}|\le\frac{\|g'_{l}\|^{2}}{(\xi\tau)^{2}}=O((\omega\tau L)^{2}/\xi^{2}).
\end{flalign*}

Choosing $\xi$ to minimize $|S_{1}|+|S_{2}|$, we have $\xi=(\omega\tau L)^{\nicefrac{2}{3}}/\sqrt{m}$
and consequently, $\|\bD'_{l}\|_{0}\le O(m(\omega\tau L)^{\nicefrac{2}{3}})$. Similarly,
we have $\|\bD'_{L}\|_{0}\le O(m\omega^{\nicefrac{2}{3}})$.
\end{proof}

We next prove that the norm of a sparse vector after the ResNet mapping.
\begin{lemma}\label{lem:sparsebound}
Suppose that $s\ge\Omega(d/\log m), \tau^2L\le O(1)$. If $\bW_l$ for $l\in[L]$ satisfy the condition as in Lemma \ref{lem:perturbed-spectral-norm}, then for all $i\in[n]$ and $a\in[L]$ and  for all $s$-sparse vectors $u\in\bbR^{m}$ and for all $v\in\bbR^{d}$, the following bound holds with probability at least $1-(nL)\cdot\exp(-\Omega(s\log m))$
\begin{flalign}
|v^T\bB\bD_{i, L}\bW_{L}\bD_{i, L-1}(\bI+\tau\bW_{L-1})\cdots\bD_{i, a}(\bI+\tau\bW_{a})u|\le O\left(\frac{\sqrt{s\log m}}{\sqrt{d}}\|u\|\|v\|\right),
\end{flalign}
where  $\bD_{i, a}$ is diagonal activation matrix for sample $i$. 
\end{lemma}

\begin{proof}
For any fixed vector $u\in\bbR^{m}$, $\|\bD_{i,L}\bW_{L}\bD_{i,L-1}(\bI+\tau\bW_{L-1})\cdots\bD_{i,a}(\bI+\tau\bW_{a})u\|\le 1.1 \|u\|$ holds with probability at least $1-\exp(-\Omega(m))$ because of Lemma \ref{lem:perturbed-spectral-norm}.

On the above event, for a fixed vector $v\in\bbR^{d}$ and any fixed $\bW_{l}$ for $l\in[L]$,
the randomness only comes from $\bB$, then $v^{T}\bB\bD_{i,L}\bW_{L}\bD_{i,L-1}(\bI+\tau\bW_{L-1})\cdots\bD_{i,a}(\bI+\tau\bW_{a})u$
is a Gaussian variable with mean 0 and variance no larger than $1.1^2\|u\|^2\cdot\|v\|^2/d$.
Hence 
\begin{flalign*}
&\bbP\big\{  |v^{T}\bB\bD_{i,L}\bW_{L}\bD_{i,L-1}(\bI+\tau\bW_{L-1})\cdots\bD_{i,a}(\bI+\tau\bW_{a})u|\ge\sqrt{s\log m}\cdot \Omega(\|u\|\|v\|/\sqrt{d})\big\}\\
 & =\erfc(\Omega(\sqrt{s\log m}))\le\exp(-\Omega(s\log m)).
\end{flalign*}
Take $\epsilon$-net over all $s$-sparse vectors of $u$ and all $d$-dimensional vectors of $v$,  if $s\ge\Omega(d/\log m)$ then with probability $1-\exp(-\Omega(s\log m))$ the claim holds for all $s$-sparse vectors of $u$ and all $d$-dimensional vectors of $v$. Further taking the union bound over all $i\in [n]$ and $a\in [L]$, the lemma is proved.
\end{proof}

\section{Gradient Lower/Upper Bounds and Their Proofs}
Because the gradient is pathological and data-dependent, in order to build bound on the gradient, we need to consider all possible point and all cases of data. Hence we first introduce an arbitrary loss vector and then the gradient bound can be obtained by taking a union bound.

We define the $\bp_{\overrightarrow{\bW}, i}(v, \cdot)$ operator. It back-propagates a vector $v$ to the $\cdot$ which could be the intermediate output $h_l$ or the parameter $\bW_l$  at the specific layer $l$ using the forward propagation state of input $i$ through the network with parameter $ \overrightarrow{\bW}$. Specifically,
\begin{flalign*}
&\bp_{\overrightarrow{\bW}, i}(v, h_l):= (\bI+\tau \bW_{l+1})^T \bD_{i,l+1} \cdots (\bI+\tau\bW_{L-1})^T\bD_{i,L-1}\bW_L^T\bD_{i,L} \bB^T v,\\
&\bp_{\overrightarrow{\bW}, i}(v, \bW_l):=\tau \left(\bD_{i,l} (\bI+\tau \bW_{l+1})^T\cdots (\bI+\tau\bW_{L-1})^T\bD_{i,L-1}\bW_L^T\bD_{i,L} \bB^T v\right) h_{i, l-1}^T \quad  \forall l\in[L-1],\\
&\bp_{\overrightarrow{\bW}, i}(v, \bW_L):=\left(\bD_{i,L} \bB^T v\right) h_{i, L-1}^T.
\end{flalign*}
Moreover, we introduce
\begin{flalign*}
\bp_{\overrightarrow{\bW}}(\overrightarrow{v}, \bW_l):=\sum_{i=1}^n \bp_{\overrightarrow{\bW}, i}(v_i, \bW_l) \quad \forall l\in[L],
\end{flalign*}
where $\overrightarrow{v}$ is composed of $n$ vectors $v_i$ for $i\in [n]$. If $v_i$ is the error signal of input $i$, then $\nabla_{\bW_l} F_i(\overrightarrow{\bW}) = \bp_{\overrightarrow{\bW},i}(\bB h_{i,L}-y_i^* , \bW_l)$.

\subsection{Gradient Upper Bound}\label{app:sec:gradient-upperbound}

\gradientupperbound*

\begin{proof}
We ignore the superscript $^{(0)}$ for simplicity. Then for an $i\in[n]$ we have
\begin{flalign*}
\left\|\nabla_{\bW_{L}}F_i(\overrightarrow{\bW})\right\|_F =\left\|\left(\bD_{i,L}\partial h_{i,L}\right)h_{i,L-1}^T\right\|_F =\left\|\left(\bD_{i,L}\partial h_{i,L}\right)\right\| \left\|h_{i,L-1}^T\right\|\le \frac{1+c}{1-\epsilon} \|\partial h_{i,L}\|,
\end{flalign*}
because of Theorem \ref{thm:initial-spectral-norm}. 
Similarly, we have for $l\in[L-1]$,
\begin{flalign*}
\left\|\nabla_{\bW_{l}}F_i(\overrightarrow{\bW})\right\|_F &=\left\|\tau \left(\bD_{i,l} (\bI+\tau \bW_{l+1})^T\cdots (\bI+\tau\bW_{L-1})^T\bD_{i,L-1}\bW_L^T\bD_{i,L} \partial h_{i,L}\right) h_{i, l-1}^T\right\|_F \\
&\le  \tau\|\bD_{i,l} (\bI+\tau \bW_{l+1})^T\cdots \bD_{i,L-1}\|\cdot \|\bW_L^T\bD_{i,L}\| \cdot \|\partial h_{i,L}\|\cdot \|h_{i,l-1}\|\\ 
&\le \frac{(1+c)^2}{(1-\epsilon)^2}(2\sqrt{2}+c)\tau \|\partial h_{i,L}\|,
\end{flalign*}
because of Theorem \ref{thm:initial-spectral-norm} and Lemma \ref{lem:spectralnorm}. 
\end{proof}

The above upper bounds hold for the initialization $\overrightarrow{\bW}^{(0)}$ because of Theorem~\ref{thm:initial-spectral-norm} and Theorem~\ref{thm:hnorm-initialization}. They also hold for all the $\overrightarrow{\bW}$ such that $\|\overrightarrow{\bW}-\overrightarrow{\bW}^{(0)}\|\le \omega$ due to Lemma~\ref{lem:perturbed-spectral-norm}.

For the quadratic loss function, we have $\|\partial h_{i,L}\|^2= \|\bB^T(\bB h_{i,L}-y_{i}^{*})\|^2= O(m/d)F_i(\overrightarrow{\bW})$. We have the gradient upper bound as follows.
\begin{theorem}\label{app:thm:gradient-upperbound}
Suppose $\omega = O(1)$. For every input sample $i\in[n]$ and for every $l\in[L-1]$ and for every $\overrightarrow{\bW}$ such that $\|\bW_L-\bW_L^{(0)}\|\le \omega$ and $\|\bW_l-\bW_l^{(0)}\|\le \tau\omega$,  the following holds with probability at least $1- O(nL^2)\cdot \exp(-\Omega(m))$ over the randomness of $\bA,\bB$ and $\overrightarrow{\bW}^{(0)}$
\begin{align}\label{eq:gradient-upperbound}
&\|\nabla_{\bW_{l}}F_i(\overrightarrow{\bW})\|_{F}^{2}\le O\left(\frac{\tau^{2}m}{d} F_i(\overrightarrow{\bW})\right), \\ &\|\nabla_{\bW_{L}}F_i(\overrightarrow{\bW})\|_{F}^{2}\le O\left(\frac{m}{d} F_i(\overrightarrow{\bW})\right). 
\end{align}
\end{theorem}

\subsection{Gradient Lower bound}\label{app:thm:gradient-lowerbound}
For the quadratic loss function, we have the following gradient lower bound.
\begin{theorem}\label{thm:gradient-lowerbound}
Let $\omega=O\left(\frac{\delta^{3/2}}{n^{3}\log^{3}m}\right)$.
With probability at least $1-\exp(-\Omega(m\omega^{\nicefrac{2}{3}}))$ over the
randomness of $\overrightarrow{\bW}^{(0)},\bA,\bB$, it satisfies
for every $\overrightarrow{\bW}$
with $\|\overrightarrow{\bW}-\overrightarrow{\bW}^{(0)}\|\le\omega$,
\begin{flalign}
 \|\nabla_{\bW_{L}}F(\overrightarrow{\bW})\|_{F}^{2}\ge\Omega\left(\frac{F(\overrightarrow{\bW})}{dn/\delta}\times m\right).
\end{flalign}
\end{theorem}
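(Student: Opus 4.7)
My plan is to lower bound $\|\nabla_{\bW_L} F(\overrightarrow{\bW})\|_F$ by exhibiting a rank-one test matrix $M$ such that $|\langle M, \nabla_{\bW_L} F\rangle|/\|M\|_F$ is large, and then invoke Cauchy--Schwarz. Writing $v_i := \bB h_{i,L} - y_i^*$, the last-layer gradient decomposes as $\nabla_{\bW_L} F = \sum_{i=1}^n (\bD_{i,L} \bB^T v_i) h_{i,L-1}^T$. I would pick the index $i^* \in [n]$ with $F_{i^*}(\overrightarrow{\bW}) = \max_i F_i(\overrightarrow{\bW})$, and take $M = q\, u^T$, where $q \in \bbR^m$ is a unit vector aligned with $\bD_{i^*,L}\bB^T v_{i^*}$ and $u \in \bbR^m$ is the orthogonal projection of $h_{i^*,L-1}$ onto the orthogonal complement of $\text{span}\{h_{j,L-1}: j\neq i^*\}$. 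By construction, the cross-sample inner products satisfy $\langle u, h_{j,L-1}\rangle = 0$ for $j \neq i^*$, so the sum collapses to a single term:
$$\langle M,\, \nabla_{\bW_L}F\rangle \;=\; \|\bD_{i^*,L}\bB^T v_{i^*}\|\cdot \|u\|^2, \qquad \|M\|_F = \|u\|.$$

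Then two quantities need lower bounds. First, I would show $\|\bD_{i^*,L}\bB^T v_{i^*}\| \geq \Omega(\sqrt{m/d})\|v_{i^*}\|$ using Gaussian concentration of $\bB$ together with the fact that a constant fraction of coordinates of $\bD_{i^*,L}$ are active (anti-concentration of the pre-activations at initialization, carried through the perturbation regime $\|\overrightarrow{\bW}-\overrightarrow{\bW}^{(0)}\|_2\le \omega$). Second, I would lower bound $\|u\| \geq \Omega(\sqrt{\delta/n})$: the data separation $\|x_i - x_j\|\ge \delta$ must propagate to the hidden layers, guaranteeing that $h_{i^*,L-1}$ has a component of norm at least $\Omega(\sqrt{\delta/n})$ orthogonal to every other $h_{j,L-1}$. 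Combining these two pieces and squaring gives
$$\|\nabla_{\bW_L}F\|_F^2 \;\geq\; \left(\frac{\langle M, \nabla F\rangle}{\|M\|_F}\right)^{\!2} \;=\; \|\bD_{i^*,L}\bB^T v_{i^*}\|^2\cdot \|u\|^2 \;\geq\; \Omega\!\left(\frac{m\delta}{dn}\right) F_{i^*},$$
since $\|v_{i^*}\|^2 = 2F_{i^*}$. This is the desired bound.

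The main obstacle is the orthogonality lower bound $\|u\| \ge \Omega(\sqrt{\delta/n})$, i.e., arguing that distinct inputs retain non-degenerate spread in the last-hidden-layer representation even after many residual blocks in the perturbation regime. I expect to address this by first using the randomness of $\bA$ and the ReLU non-linearity to lower bound $\|h_{i,0}-h_{j,0}\|$ from $\|x_i - x_j\|\ge \delta$, and then propagating the bound layer by layer by leveraging the near-identity spectral bound on the residual mapping established in Lemma \ref{lem:initial-spectral-norm} together with the forward stability of Lemma \ref{lem:hnorm-initialization} (and their perturbed counterparts in Section \ref{app:sec:perturbed-stability}). A secondary subtlety is decoupling the randomness of $\bB$, since $v_{i^*} = \bB h_{i^*,L} - y_{i^*}^*$ depends on $\bB$ through the forward pass; this is handled by first establishing the Gaussian concentration bound with $v_{i^*}$ replaced by an arbitrary fixed vector in $\bbR^d$, and then taking an $\varepsilon$-net over the unit sphere of $\bbR^d$ together with a union bound, as is standard in these over-parameterized analyses.
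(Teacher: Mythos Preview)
Your test-matrix strategy is sound in spirit, but the heart of the argument --- the lower bound $\|u\|\ge\Omega(\sqrt{\delta/n})$ --- has a genuine gap. You propose to obtain it by propagating the input separation $\|x_i-x_j\|\ge\delta$ layer by layer; but that route yields only \emph{pairwise} lower bounds $\|h_{i,L-1}-h_{j,L-1}\|\ge\Omega(\delta)$ (this is exactly Lemma~\ref{lem:separateness}). Pairwise separation does not control the distance from $h_{i^*,L-1}$ to the \emph{span} of the remaining $\{h_{j,L-1}\}_{j\ne i^*}$. For instance, $n$ unit vectors can be evenly spread on a circle in a $2$-dimensional plane inside $\bbR^m$: every pair is $\Omega(1/n)$-separated, yet each vector lies in the span of any two others, so your $u$ would vanish. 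Nothing in the forward/backward stability lemmas you cite prevents the $h_{i,L-1}$ from collapsing into a low-dimensional subspace; the near-identity spectral bound of Lemma~\ref{lem:initial-spectral-norm} is a norm statement and says nothing about linear independence of the representations across samples.

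The paper sidesteps this difficulty by a different mechanism. Rather than orthogonalizing in the $h_{\cdot,L-1}$ direction, it invokes the smoothed analysis of \cite{allen2018convergence}: using the fresh randomness of $\bW_L$ together with the pairwise bound of Lemma~\ref{lem:separateness}, one finds a large set of neurons at layer $L$ whose activation pattern $\bD_{i,L}$ is ``special'' to sample $i^*$ (active for $i^*$, with controllably small contribution from $j\ne i^*$). This is what decouples the $i^*$ term from the cross terms in the sum, and it only needs pairwise separateness, not your stronger span-distance property. The paper then reduces the perturbed case to the initialization case by bounding $\|\bp_{\overrightarrow{\bW}^{(0)}}(\overrightarrow{v},\bW_L)-\bp_{\overrightarrow{\bW}}(\overrightarrow{v},\bW_L)\|_F$ via Lemmas~\ref{lem:perturbed-hnorm} and~\ref{lem:sparsebound}, and finally takes an $\varepsilon$-net over the loss vectors. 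If you want to salvage your projection approach, you would need an additional ingredient --- e.g., a lower bound on the smallest singular value of the matrix $[h_{1,L-1},\dots,h_{n,L-1}]$ (an NTK-style Gram argument) --- which is not provided by the lemmas you reference and is essentially an independent piece of work.
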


This gradient lower bound on $\|\nabla_{\bW_{L}}F(\overrightarrow{\bW})\|_{F}^{2}$ acts like the gradient dominance condition  \citep{zou2019improved, allen2018convergence} except that our range on $\omega$ does not depend on the depth $L$. 

\begin{proof}
The gradient lower-bound at the initialization is given by the Section 6.2 in \citep{allen2018convergence} and the Lemma 4.1 in \citep{zou2019improved} via the smoothed analysis \citep{spielman2004smoothed}: with high probability the gradient is lower-bounded,  although the worst case it might be 0.  We adopt the same proof for the Lemma 4.1 in \cite{zou2019improved} based on two preconditioned results Theorem \ref{thm:hnorm-initialization} and Lemma \ref{lem:separateness}. We shall not repeat it here. 

Now suppose that we have $\|\nabla_{\bW_{L}}F(\overrightarrow{\bW}^{(0)})\|_{F}^{2}\ge\Omega\left(\frac{F(\overrightarrow{\bW}^{(0)})}{dn/\delta}\times m\right)$. We next bound the change of the gradient  after perturbing the parameter. 
Recall that 
\begin{flalign*}
\bp_{\overrightarrow{\bW}^{(0)}}(\overrightarrow{v}, \bW_L)-\bp_{\overrightarrow{\bW}}(\overrightarrow{v}, \bW_L)=\sum_{i=1}^{n}\Big((v_{i}^{T}\bB\bD_{i,L}^{(0)})^{T}(h_{i,L-1}^{(0)})^{T}-(v_{i}^{T}\bB\bD_{i,L})^{T}(h_{i,L-1})^{T}\Big)
\end{flalign*}
By Lemma \ref{lem:perturbed-hnorm} and Lemma \ref{lem:sparsebound}, we know, 
\begin{flalign*}
  \|v_{i}^{T}\bB\bD_{i,L}^{(0)}-v_{i}^{T}\bB\bD_{i,L}\| \le O(\sqrt{m\omega^{\nicefrac{2}{3}}}/\sqrt{d})\cdot\|v_{i}\|.
\end{flalign*}

Furthermore, we know 
\begin{flalign*}
\|v_{i}^{T}\bB\bD_{i,L}\|\le O(\sqrt{m/d})\cdot\|v_{i}\|.
\end{flalign*}
By Theorem \ref{thm:hnorm-initialization} and Lemma \ref{lem:perturbed-hnorm}, we have 
\begin{flalign*}
\|h_{i,L-1}^{(0)}\|\le1.1\quad\text{and }\quad\|h_{i,L-1}-h_{i,L-1}^{(0)}\|\le O(\omega).
\end{flalign*}
Combing the above bounds together, we have 
\begin{flalign*}
\|\bp_{\overrightarrow{\bW}^{(0)}}(\overrightarrow{v}, \bW_L)-\bp_{\overrightarrow{\bW}}(\overrightarrow{v}, \bW_L)\|_{F}^{2}
\le n\|\overrightarrow{v}\|^{2}\cdot O(\sqrt{m\omega^{\nicefrac{2}{3}}/d}+\omega\sqrt{m/d})^{2} \le n\|\overrightarrow{v}\|^{2}\cdot O\left(\frac{m}{d}\omega^{\nicefrac{2}{3}}\right)
\end{flalign*}

Hence the gradient lower bound still holds for $\overrightarrow{\bW}$
given $\omega<O\left(\frac{\delta^{3/2}}{n^{3}}\right)$.

Finally, taking $\epsilon-$net over all possible vectors $\overrightarrow{v}=(v_{1},\dots,v_{n})\in(\bbR^{d})^{n}$,
we prove that the above gradient lower bound holds for all $\overrightarrow{v}$. In particular, we can now plug
in the choice of $v_{i}=\bB h_{i,L}-y_{i}^{*}$ and it implies our
desired bounds on the true gradients. 
\end{proof}

The gradient lower bound requires the following property.
\begin{lemma}\label{lem:separateness} 
For any $\delta$ and any pair $(x_{i},x_{j})$ satisfying $\|x_{i}-x_{j}\|\ge\delta$, then $\|h_{i,l}-h_{j,l}\|\ge\Omega(\delta)$ holds for all $l\in[L]$ with probability at least $1-O(n^{2}L)\cdot\exp(-\Omega(\log^{2}{m}))$ given that $\tau\le O(1/(\sqrt{L}\log{m}))$ and $m\ge \Omega(\tau^2 L^2\delta^{-2})$. 
\end{lemma}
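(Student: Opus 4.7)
The plan is to induct on the layer index $l$, showing that each layer shrinks $\|h_{i,l} - h_{j,l}\|$ by at most a factor of $1 - O(\tau^2 \log^2 m)$ with high probability, so that iterating over $L$ residual layers the total multiplicative shrinkage remains bounded away from zero exactly when $\tau \le 1/\Omega(\sqrt{L}\log m)$. A union bound over the $\binom{n}{2}$ pairs and $L$ layers then yields the stated probability $1 - O(n^2 L) \cdot \exp(-\Omega(\log^2 m))$. The base case $l = 0$ is standard: $\bA(x_i - x_j)$ is Gaussian with coordinate variance at least $2\delta^2/m$, and a Chernoff plus Gaussian anti-concentration argument shows that ReLU retains $\Omega(\delta)$ of this norm, giving $\|h_{i,0} - h_{j,0}\| \ge \Omega(\delta)$ with probability $1 - \exp(-\Omega(m))$.

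For the inductive step at a residual layer $l \in [L-1]$, I would use the decomposition
\begin{flalign*}
h_{i,l} - h_{j,l} = \bD_{i,l}(\bI + \tau \bW_l)(h_{i,l-1} - h_{j,l-1}) + (\bD_{i,l} - \bD_{j,l})(\bI + \tau \bW_l) h_{j,l-1},
\end{flalign*}
and call the two summands $A_l$ and $B_l$. The direction $v := h_{i,l-1} - h_{j,l-1}$ depends only on $\bA, \bW_1, \dots, \bW_{l-1}$ and so is independent of $\bW_l$; conditioning on the earlier randomness and running the same martingale argument as in Lemma \ref{lem:initial-spectral-norm} but with the inequality $\log(1+x) \ge x - x^2$ yields $\|(\bI + \tau \bW_l) v\|^2 \ge (1 - O(\tau^2 \log m))\|v\|^2$ with failure probability $\exp(-\Omega(\log^2 m))$. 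Applying $\bD_{i,l}$ retains an $\Omega(1)$ fraction of this norm, giving the desired lower bound on $\|A_l\|$. For $B_l$, the diagonal of $\bD_{i,l} - \bD_{j,l}$ is nonzero only at indices where $(g_{i,l})_k$ and $(g_{j,l})_k$ have opposite signs; since the two pre-activations differ by $(\bI + \tau \bW_l)v$ of norm $O(\|v\|)$ and each coordinate of $g_{i,l}$ is approximately $\cN(0, \Theta(1/m))$, Gaussian anti-concentration plus Chernoff bounds the sparsity of $\bD_{i,l} - \bD_{j,l}$ by $O(\sqrt{m}\|v\|)$. A Lemma \ref{lem:sparsebound}-type inner product bound for the interaction of this sparse mask with $(\bI + \tau \bW_l)h_{j,l-1}$ then yields $\|B_l\| \le \tilde{O}(\|v\|/\sqrt{m})$, which is dominated by $\|A_l\|$ once $m \ge \Omega(\tau^2 L^2 \delta^{-2})$. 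Combining gives the needed per-layer contraction, and the final layer $l = L$ is handled analogously with $(\bI + \tau \bW_L)$ replaced by $\bW_L$ and its standard Gaussian spectral concentration.

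The main obstacle will be handling the coupling between $\bW_l$ and the masks $\bD_{i,l}, \bD_{j,l}$ at the current layer: although $v$ is independent of $\bW_l$, the masks depend on $\bW_l$ applied to $h_{i,l-1}$ and $h_{j,l-1}$, which are generally not orthogonal to $v$. I would resolve this by decomposing $\bW_l v = \alpha_i \bW_l h_{i,l-1} + \alpha_j \bW_l h_{j,l-1} + \bW_l v^\perp$, where $v^\perp$ is the projection of $v$ onto the orthogonal complement of $\mathrm{span}(h_{i,l-1}, h_{j,l-1})$; the piece $\bW_l v^\perp$ is Gaussian and independent of both masks, so the concentration underlying the lower bound on $\|A_l\|$ proceeds cleanly, while the two parallel pieces contribute only lower-order terms because $|\alpha_i|, |\alpha_j|$ are controlled by the already-established closeness $\langle v, h_{i,l-1}\rangle \approx \|v\|^2/2$ from Lemma \ref{lem:hnorm-initialization}. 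This decoupling step, together with the careful tracking of the simultaneous action of two correlated masks, is the technically delicate part of the argument.
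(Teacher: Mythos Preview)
The paper does not supply a self-contained proof here; it only remarks that the argument follows \cite[Appendix~C.1]{allen2018convergence} once the extra width condition $m\ge\Omega(\tau^2 L^2\delta^{-2})$ is imposed. Your induction outline is therefore in the right spirit, but the inductive step contains a genuine gap.

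First, a factual slip: for residual layers $l\in[L-1]$ the pre-activation is $g_{i,l}=h_{i,l-1}+\tau\bW_l h_{i,l-1}$, whose dominant part $h_{i,l-1}\ge 0$ is a ReLU output, not a centered Gaussian. So ``each coordinate of $g_{i,l}$ is approximately $\cN(0,\Theta(1/m))$'' is false in this setting, and the anti-concentration argument you invoke to bound the sparsity of $\bD_{i,l}-\bD_{j,l}$ (and hence $\|B_l\|$) does not apply in the form you state.

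Second, and more importantly, the split $\|v_l\|\ge\|A_l\|-\|B_l\|$ together with ``$\bD_{i,l}$ retains an $\Omega(1)$ fraction'' cannot deliver the $1-O(\tau^2\log^2 m)$ per-layer factor you announced. On the generically $\Theta(m)$-sized set of indices where $(h_{i,l-1})_k=0$ but $(h_{j,l-1})_k>0$, we have $(g_{i,l})_k=\tau(\bW_l h_{i,l-1})_k$, a tiny centered Gaussian, so $(\bD_{i,l})_{k,k}$ is Bernoulli$(1/2)$; the mask therefore kills about half of the mass of $(\bI+\tau\bW_l)v_{l-1}$ on that set, and $\|A_l\|$ alone satisfies only $\|A_l\|\ge c\|v_{l-1}\|$ for some constant $c<1$. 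On the very same killed coordinates one has $(B_l)_k=-(g_{j,l})_k\approx-(h_{j,l-1})_k=(v_{l-1})_k$, so in fact $\|B_l\|=\Omega(\|v_{l-1}\|)$, not $\tilde O(\|v_{l-1}\|/\sqrt m)$. Each of $A_l$ and $B_l$ is individually a constant fraction away from $v_{l-1}$, but their \emph{sum} recovers $(v_{l-1})_k$ up to $O(\tau/\sqrt m)$ corrections; your triangle-inequality split discards exactly this cancellation and, iterated over $L$ layers, would yield an exponentially small lower bound. A working argument must control $\|h_{i,l}-h_{j,l}\|^2$ (or $\langle h_{i,l},h_{j,l}\rangle$) directly, tracking the $O(\tau)$ per-layer drift coordinatewise; the width requirement $m\ge\Omega(\tau^2 L^2\delta^{-2})$ is precisely what makes the accumulated drift over $L$ layers small relative to $\delta$, and this is what the Allen--Zhu argument, adapted to the residual structure, provides.
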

The proof of Lemma \ref{lem:separateness} follows the Appendix C in \cite{allen2018convergence}.

\section{Semi-smoothness for $\tau\le O(1/\sqrt{L})$}\label{app:thm:semismooth}

With the help of Theorem \ref{app:thm:gradient-upperbound} and several other improvements, we can obtain a tighter bound on the semi-smoothness condition of the objective function.

\begin{restatable}{theorem}{semismoothness}\label{thm:semismooth}
Let $\omega = O\left(\frac{\delta^{3/2}}{n^3L^{7/2}}\right)$  and $\tau^2L\le 1$. With high probability, we have for every $\breve{\overrightarrow{\bW}}\in(\bbR^{m\times m})^{L}$ with $\left\|\breve{\overrightarrow{\bW}}-\overrightarrow{\bW}^{(0)}\right\|\le\omega$ and  for every $\overrightarrow{\bW}'\in(\bbR^{m\times m})^{L}$ with $\|\overrightarrow{\bW}'\|\le\omega$, we have 
{\small\begin{flalign*}
F(\breve{\overrightarrow{\bW}}+\overrightarrow{\bW}') \le& F(\breve{\overrightarrow{\bW}})+\langle\nabla F(\breve{\overrightarrow{\bW}}),\overrightarrow{\bW}'\rangle+O(\frac{nm}{d})\|\overrightarrow{\bW}'\|_F^2+O\left(\sqrt{\frac{m}{nd}}\omega^{\frac{1}{3}}L^{\frac{7}{6}}\right)\|\overrightarrow{\bW}'\|_F\sqrt{F(\breve{\overrightarrow{\bW}})}. 
\end{flalign*}}
\end{restatable}

We will show the semi-smoothness theorem for a more general $\omega\in \left[\Omega\left(\left(d/(m\log m)\right)^{\nicefrac{3}{2}}\right),	O(1)\right]$ and the above high probability is at least $1-\exp(-\Omega(m\omega^{\nicefrac{2}{3}}))$
over the randomness of $\overrightarrow{\bW}^{(0)},\bA,\bB$.

Before going to the proof of the theorem, we introduce a lemma. 
\begin{lemma}\label{lem:semilemma}
There exist diagonal matrices $\bD''_{i,l}\in\bbR^{m\times m}$ with
entries in {[}-1,1{]} such that $\forall i\in[n],\forall l\in[L-1]$,
\begin{flalign}
h_{i,l}-\breve{h}_{i,l}=\sum_{a=1}^{l}(\breve{\bD}_{i,l}+\bD''_{i,l})(\bI+\tau\breve{\bW}_{l})\cdots(\bI+\tau\breve{\bW}_{a+1})(\breve{\bD}_{i,a}+\bD''_{i,a})\tau\bW'_{a}h_{i, a-1},
\end{flalign}
and 
\begin{flalign}
h_{i,L}-\breve{h}_{i,L}=&(\breve{\bD}_{i,L}+\bD''_{i,L})\bW'_{L}h_{i, L-1}\nn\\
&+\sum_{a=1}^{L-1}(\breve{\bD}_{i,L}+\bD''_{i,L})\breve{\bW}_{L}\cdots(\bI+\tau\breve{\bW}_{a+1})(\breve{\bD}_{i,a}+\bD''_{i,a})\tau\bW'_{a}h_{i,a-1}.
\end{flalign}

Furthermore, we then have $\forall l\in[L-1],\|h_{i,l}-\breve{h}_{i,l}\|\le O(\tau^2L\omega)$, $\|\bD''_{i,l}\|_{0}\le O(m(\omega\tau L)^{\nicefrac{2}{3}})$, and $\|h_{i,L}-\breve{h}_{i,L}\|\le O((1+\tau\sqrt{L})\|\bW'\|_F)$, $\|\bD''_{i,L}\|_{0}\le O(m\omega^{\nicefrac{2}{3}})$ and 
$$\|\bB h_{i,L}-\bB\breve{h}_{i,L}\|\le O(\sqrt{m/d})\|\bW'\|_F$$
hold with probability $1-\exp(-\Omega(m\omega^{\nicefrac{2}{3}}))$ given $\|\bW'_{L}\|\le \omega, \|\bW'_{l}\|\le \tau \omega$ for $l\in [L-1]$ and  $\omega\le O(1), \tau \sqrt{L}\le 1$. 
\end{lemma}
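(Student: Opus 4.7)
The plan is to derive the expansion via the piecewise-linear calculus of ReLU and then read off the magnitude, sparsity, and output bounds from the perturbation lemmas already proved in Appendix \ref{app:sec:perturbed-stability}.

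First, I would use the elementary coordinate-wise identity that for any two real numbers $a,b$ one has $\phi(a)-\phi(b)=c(a-b)$ for some $c\in[0,1]$ (check the four sign cases). Applied to $g_{i,l}$ (computed at parameters $\breve{\overrightarrow{\bW}}+\overrightarrow{\bW}'$) and to $\breve{g}_{i,l}$ (computed at $\breve{\overrightarrow{\bW}}$), this produces a diagonal matrix $\bar{\bD}_{i,l}$ with entries in $[0,1]$ such that $h_{i,l}-\breve{h}_{i,l}=\bar{\bD}_{i,l}(g_{i,l}-\breve{g}_{i,l})$. Setting $\bD''_{i,l}:=\bar{\bD}_{i,l}-\breve{\bD}_{i,l}$ then yields a diagonal matrix with entries in $[-1,1]$, supported exactly on those coordinates where the ReLU sign pattern of $g_{i,l}$ disagrees with that of $\breve{g}_{i,l}$. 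This construction gives the $\bD''_{i,l}$ whose existence the lemma asserts.

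Second, I would unroll. Because the input layer is independent of the residual weights, $h_{i,0}-\breve{h}_{i,0}=0$. For $l\in[L-1]$ the residual structure gives $g_{i,l}-\breve{g}_{i,l}=(\bI+\tau\breve{\bW}_{l})(h_{i,l-1}-\breve{h}_{i,l-1})+\tau\bW'_{l}h_{i,l-1}$, so combining with the ReLU identity produces the recursion $h_{i,l}-\breve{h}_{i,l}=(\breve{\bD}_{i,l}+\bD''_{i,l})[(\bI+\tau\breve{\bW}_{l})(h_{i,l-1}-\breve{h}_{i,l-1})+\tau\bW'_{l}h_{i,l-1}]$. Iterating from level $l$ down to $1$ produces the summation displayed for $l\in[L-1]$. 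For the top layer one has $g_{i,L}-\breve{g}_{i,L}=\breve{\bW}_{L}(h_{i,L-1}-\breve{h}_{i,L-1})+\bW'_{L}h_{i,L-1}$ (no identity skip), and substituting the already-derived expansion for $h_{i,L-1}-\breve{h}_{i,L-1}$ yields the $l=L$ formula with the $\bW'_{L}h_{i,L-1}$ boundary term peeled off.

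Third, I would obtain the norm and sparsity bounds by re-running the argument of Lemma \ref{lem:perturbed-hnorm} with the base point shifted from $\overrightarrow{\bW}^{(0)}$ to $\breve{\overrightarrow{\bW}}$. That proof used only spectral control of the relevant matrix product, which via Lemma \ref{lem:perturbed-spectral-norm} carries over to $\breve{\overrightarrow{\bW}}$ since $\breve{\overrightarrow{\bW}}$ is itself $O(\omega)$-close to initialization, together with a Gaussian anti-concentration argument splitting coordinates of $\breve{g}_{i,l}$ into a small-value set $S_{1}$ and a deviation-dominated set $S_{2}$. Exactly the same split applied here yields $\|\bD''_{i,l}\|_{0}\le O(m(\omega\tau L)^{\nicefrac{2}{3}})$ for $l\in[L-1]$ and $\|\bD''_{i,L}\|_{0}\le O(m\omega^{\nicefrac{2}{3}})$, while telescoping the magnitude of the recursion delivers $\|h_{i,l}-\breve{h}_{i,l}\|\le O(\tau^{2}L\omega)$ and $\|h_{i,L}-\breve{h}_{i,L}\|\le O(\omega)$. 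Finally, for $\|\bB h_{i,L}-\bB\breve{h}_{i,L}\|$, I would apply $\bB$ to the top-layer expansion term by term; each summand factors as $\bB\cdot(\text{product of diagonals and }(\bI+\tau\breve{\bW}_{l}))\cdot\tau\bW'_{a}h_{i,a-1}$ (or $\bB\cdot(\breve{\bD}_{i,L}+\bD''_{i,L})\bW'_{L}h_{i,L-1}$), the internal product has $O(1)$ spectral norm by Lemma \ref{lem:perturbed-spectral-norm}, $\|h_{i,a-1}\|=O(1)$ by the forward-stability lemmas, and $\bB$ applied to a fixed vector produces Gaussian coordinates of variance $O(1/d)$, so $\|\bB v\|\le O(\sqrt{m/d})\|v\|$ uniformly on an $\varepsilon$-net; summing over $a$ yields the stated $O(\sqrt{m/d})(\|\bW'_{L}\|_{2}+\sum_{l=1}^{L-1}\tau\|\bW'_{l}\|_{2})$ bound. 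The main obstacle will be the sparsity estimate on $\bD''_{i,l}$, because it is this sparsity (rather than the crude bound $\|\bD''_{i,l}\|_{2}\le 1$) that suppresses the first-order term in the eventual semi-smoothness inequality of Theorem \ref{thm:semismooth-informal}; care is needed to ensure the anti-concentration bookkeeping of Appendix \ref{app:sec:perturbed-stability} still applies cleanly after the base-point shift from $\overrightarrow{\bW}^{(0)}$ to $\breve{\overrightarrow{\bW}}$.
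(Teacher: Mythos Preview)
Your proposal is correct and closely tracks the paper's proof: both use the coordinate-wise ReLU identity (the paper records this as Proposition~\ref{prop:8.3}) to introduce $\bD''_{i,l}$, unroll the resulting recursion into the telescoping sum, and then read off the norm bounds from Lemma~\ref{lem:perturbed-spectral-norm} together with the forward-stability lemmas.

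The one place you and the paper diverge is the sparsity estimate on $\bD''_{i,l}$. You propose to re-run the anti-concentration argument of Lemma~\ref{lem:perturbed-hnorm} with $\breve{g}_{i,l}$ as the reference point, and you rightly flag this as the delicate step since $\breve{g}_{i,l}$ is not a fresh Gaussian (the perturbation taking $\overrightarrow{\bW}^{(0)}$ to $\breve{\overrightarrow{\bW}}$ may depend on the randomness). The paper sidesteps this entirely with a simpler observation: if $(\bD''_{i,l})_{k,k}\neq 0$ then $(g_{i,l})_k$ and $(\breve{g}_{i,l})_k$ have opposite signs, so at least one of them must differ in sign from $(g^{(0)}_{i,l})_k$. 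Hence $\|\bD''_{i,l}\|_0$ is bounded by the sum of the sign-change counts for $\overrightarrow{\bW}=\breve{\overrightarrow{\bW}}+\overrightarrow{\bW}'$ versus $\overrightarrow{\bW}^{(0)}$ and for $\breve{\overrightarrow{\bW}}$ versus $\overrightarrow{\bW}^{(0)}$, and each of those is controlled directly by Lemma~\ref{lem:perturbed-hnorm} since both parameter points lie in the $\omega$-ball around initialization. This triangle-through-initialization trick removes the need for any new anti-concentration bookkeeping; otherwise your argument and the paper's coincide.
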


\begin{proof}[Proof of Theorem \ref{thm:semismooth}] First of all, we know that $\breve{loss}_{i}:=\bB\breve{h}_{i,L}-y_{i}^{*}$
\begin{flalign}
\frac{1}{2}\|\bB h_{i,L}-y_{i}^{*}\|^{2} & =\frac{1}{2}\|\breve{loss}_{i}+\bB(h_{i,L}-\breve{h}_{i,L})\|^{2}\nn\\
 & =\frac{1}{2}\|\breve{loss}_{i}\|^{2}+\breve{loss}_{i}^{T}\bB(h_{i,L}-\breve{h}_{i,L})+\frac{1}{2}\|\bB(h_{i,L}-\breve{h}_{i,L})\|^{2},
\end{flalign}
and 
\begin{flalign}
 & \nabla_{\bW_{l}}F(\overrightarrow{\bW})=\sum_{i=1}^{n}(loss_{i}^{T}\bB\bD_{i,L}\bW_{L}\cdots\bD_{i,l+1}(\bI+\tau\bW_{l+1})\bD_{i,l})^{T}(\tau h_{i,l-1})^{T}.\\
 & \nabla_{\bW_{L}}F(\overrightarrow{\bW})=\sum_{i=1}^{n}(loss_{i}^{T}\bB\bD_{i,L})^{T}(h_{i,l-1})^{T}.
\end{flalign}
We use the relation that for two matrices $A,B$, $\langle A, B\rangle = \text{tr}(A^TB)$. Then, we can write 
\begin{flalign}
\langle\nabla_{\bW_{l}}F(\breve{\overrightarrow{\bW}}),\bW'_{l}\rangle = \sum_{i=1}^{n}(\breve{loss}_{i}^{T}\bB\breve{\bD}_{i,L}\breve{\bW}_{L}\cdots(\bI+\tau\breve{\bW}_{l+1})\breve{\bD}_{i,l}\bW'_{l}(\tau\breve{h}_{i,l-1}).
\end{flalign}
Then further by Lemma 7, we have
\begin{flalign}
 & F(\breve{\overrightarrow{\bW}}+\overrightarrow{\bW}')-F(\breve{\overrightarrow{\bW}})-\langle\nabla F(\breve{\overrightarrow{\bW}}),\overrightarrow{\bW}'\rangle\nn\\
 & =-\langle\nabla F(\breve{\overrightarrow{\bW}}),\overrightarrow{\bW}'\rangle+\frac{1}{2}\sum_{i=1}^{n}\|\bB h_{i,L}-y_{i}^{*}\|^{2}-\|\bB\breve{h}_{i,L}-y_{i}^{*}\|^{2}\nn\\
 & =-\sum_{l=1}^{L}\langle\nabla_{\bW_{l}}F(\breve{\overrightarrow{\bW}}),\bW'_{l}\rangle+\sum_{i=1}^{n}\breve{loss}_{i}^{T}\bB(h_{i,L}-\breve{h}_{i,L})+\frac{1}{2}\|\bB(h_{i,L}-\breve{h}_{i,L})\|^{2}\nn\\
 & \stackrel{(a)}{=}\frac{1}{2}\sum_{i=1}^{n}\|\bB(h_{i,L}-\breve{h}_{i,L})\|^{2}+\sum_{i=1}^{n}\breve{loss}_{i}^{T}\bB\left((\breve{\bD}_{i,L}+\bD''_{i,L})\bW'_{L}h_{i,L-1}-(\breve{\bD}_{i,L})\bW'_{L}\breve{h}_{i,L-1}\right)\nn\\
 & \quad+\sum_{i=1}^{n}\sum_{l=1}^{L-1}\breve{loss}_{i}^{T}\bB\Big((\breve{\bD}_{i,L}+\bD''_{i,L})\breve{\bW}_{L}\cdots(\bI+\tau\breve{\bW}_{l+1})(\breve{\bD}_{i,l}+\bD''_{i,l})\tau\bW'_{l}h_{i,l-1}\nn\\
 & \quad\quad\quad\quad\quad\quad \quad\quad-\breve{\bD}_{i,L}\breve{\bW}_{L}\cdots(\bI+\tau\breve{\bW}_{l+1})\breve{\bD}_{i,l}\bW'_{l}(\tau\breve{h}_{i,l-1})\Big),\label{eq:semiRHS}
\end{flalign}
where (a) is due to Lemma \ref{lem:semilemma}.

We next bound the RHS of \eqref{eq:semiRHS}. We first use Lemma \ref{lem:semilemma}
to get 
\begin{flalign}
\|\bB(h_{i,L}-\breve{h}_{i,L})\|\le O(\sqrt{m/d})\|\bW'\|_F.
\end{flalign}
Next we calculate that for $l=L$, 
\begin{flalign}
 & \Big|\breve{loss}_{i}^{T}\bB\left((\breve{\bD}_{i,L}+\bD''_{i,L})\bW'_{L}h_{i,L-1}-(\breve{\bD}_{i,L})\bW'_{L}\breve{h}_{i,L-1}\right)\Big|\nn\\
 & \le\Big|\breve{loss}_{i}^{T}\bB\left(\bD''_{i,L}\bW'_{L}h_{i,L-1}\right)\Big|+\Big|\breve{loss}_{i}^{T}\bB\left(\breve{\bD}_{i,L}\bW'_{L}(h_{i,L-1}-\breve{h}_{i,L-1})\right)\Big|.
\end{flalign}
For the first term, by Lemma \ref{lem:sparsebound} and Lemma \ref{lem:semilemma}, we have 
\begin{flalign}
\left|\breve{loss}_{i}^{T}\bB\left(\bD''_{i,L}\bW'_{L}h_{i,L-1}\right)\right| &\le O\left(\frac{\sqrt{m\omega^{\nicefrac{2}{3}}}}{\sqrt{d}}\right) \|\breve{loss}_{i}\|\cdot \|\bW'_{L}h_{i,L-1}\|\nn\\
 & \le O\left(\frac{\sqrt{m\omega^{\nicefrac{2}{3}}}}{\sqrt{d}}\right) \|\breve{loss}_{i}\| \cdot\|\bW'_{L}\|,
\end{flalign}
where the last inequality is due to $\|h_{i,L-1}\|\le O(1)$. For the second term, by Lemma \ref{lem:semilemma} we have 
\begin{flalign}
 & \left|\breve{loss}_{i}^{T}\bB\left(\breve{\bD}_{i,L}\bW'_{L}(h_{i,L-1}-\breve{h}_{i,L-1})\right)\right|\nn\\
 & \le\|\breve{loss}_{i}\|\cdot\left\Vert \bB\breve{\bD}_{i,L}\right\Vert _{2}\cdot\|\bW'_{L}\|\|h_{i,L-1}-\breve{h}_{i,L-1}\|\nn\\
 & \le\|\breve{loss}_{i}\|\cdot O\left(\frac{\omega\sqrt{m}}{\sqrt{d}}\right)\cdot \|\bW'_{L}\|,
\end{flalign}
where the last inequality is due to the assumption $\|\bW'_{L}\|\le \omega$. Similarly for $l\in[L-1]$, we ignore the index $i$ for simplicity.
 
\begin{flalign}
 & \Big|\sum_{l=1}^{L-1}\breve{loss}^{T}\Big(\bB(\breve{\bD}_{L}+\bD''_{L})\breve{\bW}_{L}\cdots(\bI+\tau\breve{\bW}_{l+1})(\breve{\bD}_{l}+\bD''_{l})-\bB\breve{\bD}_{L}\breve{\bW}_{L}\cdots(\bI+\tau\breve{\bW}_{l+1})\breve{\bD}_{l}\Big)\bW'_{l}(\tau h_{l-1})\Big|\nn\\
  & =\Big| \sum_{l=1}^{L-1}\breve{loss}^{T}\bB\bD''_{L}\breve{\bW}_{L}(\bD_{L-1}+\bD''_{L-1})(\bI+\tau\breve{\bW}_{L-1})\cdots(\bD_{l}+\bD''_{l})(\tau \bW'_l h_{l-1})\Big|\nn\\
 & \quad +\Big| \sum_{l=1}^{L-1}\sum_{a=l}^{L-1}\breve{loss}^{T}\bB\breve{\bD}_{L}\breve{\bW}_{L}\cdots(\bI+\tau\breve{\bW}_{a+1})\bD''_{a}(\bI+\tau\breve{\bW}_{a})\cdots(\bD_{l}+\bD''_{l})(\tau \bW'_l h_{l-1})\Big|\nn\\
 &\quad + \Big|\sum_{l=1}^{L-1}\breve{loss}^{T}\bB\breve{\bD}_{L}\breve{\bW}_{L}\cdots(\bI+\tau\breve{\bW}_{l+1})\breve{\bD}_{l}\bW'_{l}\tau(h_{l-1}-\breve{h}_{l-1})\Big| \label{eq:last-step}
 \end{flalign}
 
 We next bound the terms in \eqref{eq:last-step} one by one. For the first term, by Lemma \ref{lem:sparsebound} and Lemma \ref{lem:semilemma}, we have
 \begin{flalign}
 &\left|\sum_{l=1}^{L-1}\breve{loss}^{T}\bB\bD''_{L}\breve{\bW}_{L}(\bD_{L-1}+\bD''_{L-1})(\bI+\tau\breve{\bW}_{L-1})\cdots(\bD_{l}+\bD''_{l})(\tau \bW'_l h_{l-1})\right|\nn\\
 &\le O\left(\frac{\sqrt{m\omega^{\nicefrac{2}{3}}}}{\sqrt{d}}\right)\left\|\breve{loss}\right\|\cdot \left\|\sum_{l=1}^{L-1}\breve{\bW}_{L}(\bD_{L-1}+\bD''_{L-1})(\bI+\tau\breve{\bW}_{L-1})\cdots(\bD_{l}+\bD''_{l})(\tau \bW'_l h_{l-1})\right\|\nn\\
 &\stackrel{(a)}{\le} O\left(\frac{\sqrt{m\omega^{\nicefrac{2}{3}}}}{\sqrt{d}}\right) \cdot \|\breve{loss}\|\cdot \tau\sqrt{L}  \|\bW'_{L-1:1}\|_F,
\end{flalign}
where  $\|\bW'_{L-1:1}\|_F=\sqrt{\sum_{l=1}^{L-1}\|\bW'_l\|_F^2}$ and (a) is due to the similar argument \eqref{eq:spectral-to-fro} in the proof Lemma \ref{lem:semilemma} and the fact $\left\|\breve{\bW}_{L}(\bD_{L-1}+\bD''_{L-1})(\bI+\tau\breve{\bW}_{L-1})\cdots(\bD_{l}+\bD''_{l})\right\| = O(1)$ and $\| h_{l-1}\|=O(1)$ holds with high probability. We note that the inequality (a) helps us save a $\sqrt{L}$ factor in our main theorem.

We have similar bound for the second term of \eqref{eq:last-step}
 \begin{flalign}
 &\left|\sum_{l=1}^{L-1}\sum_{a=l}^{L-1}\breve{loss}^{T}\bB\breve{\bD}_{L}\breve{\bW}_{L}\cdots(\bI+\tau\breve{\bW}_{a+1})\bD''_{a}(\bI+\tau\breve{\bW}_{a})\cdots(\bD_{l}+\bD''_{l})(\tau \bW'_l h_{l-1})\right|\nn\\
 &\le O\left(\frac{\sqrt{m(\omega\tau{L})^{\nicefrac{2}{3}}}}{\sqrt{d}}\right) \cdot \|\breve{loss}\|\cdot\tau \sum_{a=1}^{L-1} \sqrt{a}\|\bW'_{a:1}\|_F\nn\\
 &\le O\left(\frac{\sqrt{m(\omega\tau{L})^{\nicefrac{2}{3}}}}{\sqrt{d}}\right) \cdot \|\breve{loss}\|\cdot\tau L^{3/2}\|\bW'_{L-1:1}\|_F.
\end{flalign}

For the last term in \eqref{eq:last-step}, we have 
\begin{flalign}
 & \left|\sum_{l=1}^{L-1}\breve{loss}^{T}\bB\breve{\bD}_{L}\breve{\bW}_{L}\cdots(\bI+\tau\breve{\bW}_{l+1})\breve{\bD}_{l}\bW'_{l}\tau(h_{l-1}-\breve{h}_{l-1})\right|\nn\\
 & \le\|\breve{loss}\|\cdot O\left(\sqrt{m/d}\right)\cdot\sum_{l=1}^{L-1}\|\bW'_l\| \cdot \tau^3L\omega\nn\\
 &\le\|\breve{loss}\|\cdot O\left(\sqrt{m/d}\right)\cdot\|\bW'_{L-1:1}\|_{F}\cdot(\tau^2 L)^{3/2},
\end{flalign}
where is the last inequality is due to the bound on
$\|h_{l-1}-\breve{h}_{l-1}\|$ in Lemma \ref{lem:semilemma}. 
Hence
 \begin{flalign}
 \eqref{eq:last-step} & \le O\left(\frac{\sqrt{m(\omega\tau{L})^{\nicefrac{2}{3}}}}{\sqrt{d}}\right) \cdot \|\breve{loss}\|\cdot\tau L^{3/2}\|\bW_{L-1:1}\|_F\nn\\
  & \le O\left((\tau L)^{\nicefrac{4}{3}}\frac{\sqrt{mL\omega^{\nicefrac{2}{3}}}}{\sqrt{d}}\right) \cdot \|\breve{loss}\|\cdot  \|\bW'_{L-1:1}\|_{F}.
\end{flalign}
Having all the above together and using triangle inequality, we have
the result.
\end{proof}

\begin{proof}[Proof of Lemma \ref{lem:semilemma}]

The proof relies on the following lemma.
\begin{lemma}[Proposition 8.3 in in \citet{allen2018convergence}] \label{prop:8.3}
Given vectors $a, b\in \bbR^m$ and $\bD \in \bbR^{m\times m}$ the diagonal matrix where $\bD_{k,k}=\bone_{a_k\ge 0}$. Then, there exists a diagonal matrix $\bD''\in \bbR^{m\times m}$ with 
\begin{itemize}
\item $|\bD_{k,k}+\bD''_{k,k}|\le 1$ and $|\bD''_{k,k}|\le 1$ for every $k\in [m]$,

\item $\bD''_{k,k}\neq 0 $ only when $\bone_{a_k\ge 0}\neq \bone_{b_k\ge 0}$,

\item $\phi(a)-\phi(b) = (\bD+\bD'')(a-b)$.
\end{itemize}
\end{lemma}

Fixing index $i$ and ignoring the subscript in $i$ for simplicity, by Lemma \ref{prop:8.3}, for each $l\in[L-1]$ there exists a $\bD''_{l}$ such that $|(\bD''_{l})_{k,k}|\le1$ and  
\begin{flalign}
h_l - \breve{h}_{l} & =\phi((\bI+\tau\breve{\bW}_{l}+\tau\bW'_{l})h_{l-1})-\phi((\bI+\tau\breve{\bW}_{l})\breve{h}_{l-1})\nn\\
&=(\breve{\bD}_l+\bD''_{l})\left((\bI+\tau\breve{\bW}_{l}+\tau\bW'_{l})h_{l-1}-(\bI+\tau\breve{\bW}_{l})\breve{h}_{l-1}\right)\nn\\
&=(\breve{\bD}_l+\bD''_{l})(\bI+\tau\breve{\bW}_{l})(h_{l-1} - \breve{h}_{l-1}) + (\breve{\bD}_l+\bD''_{l}) \tau\bW'_{l}h_{l-1}\nn\\
 &= \sum_{a=1}^{l} (\breve{\bD}_{l}+\bD''_{l})(\bI+\tau\breve{\bW}_{l})\cdots(\bI+\tau\breve{\bW}_{a+1})(\breve{\bD}_{a}+\bD''_{a})\tau\bW'_{a}h_{a-1}\nn
\end{flalign}

Then we have following properties. For $l\in[L-1]$, $\|h_l - \breve{h}_{l}\|\le O(\tau^2 L \omega)$. This is because $\|(\breve{\bD}_{l}+\bD''_{l})(\bI+\tau\breve{\bW}_{l})\cdots(\bI+\tau\breve{\bW}_{a+1})(\breve{\bD}_{a}+\bD''_{a})\|\le 1.1$ from Lemma \ref{lem:perturbed-spectral-norm}; $\|h_{a-1}\|\le O(1)$ from Theorem \ref{thm:hnorm-initialization}; and the assumption $\|\bW'_l\|\le \tau \omega$ for $l\in [L-1]$.

To have a tighter bound on $\|h_L-\breve{h}_L\|$, let us introduce $\bW''_{b}:= \sum_{a=b}^{l} (\breve{\bD}_{l}+\bD''_{l})(\bI+\tau\breve{\bW}_{l})\cdots(\bI+\tau\breve{\bW}_{a+1})(\breve{\bD}_{a}+\bD''_{a})\bW'_{a}$, for $b=1, ..., l$. Then we have 
 \begin{flalign}
h_L - \breve{h}_{L}  =\left[\bW''_L, \bW''_{L-1}, ..., \bW''_{1}\right] [ h_{L-1}^T, \tau h_{L-2}^T, ..., \tau h_{0}^T]^T.
\end{flalign}
It is easy to get 
$$ \|[\tau h_{l-1}^T, \tau h_{l-2}^T, ..., \tau h_{0}^T]^T\|= \sqrt{\tau^2 \sum_{a=0}^{l-1}\|h_{a}\|^2}\le \tau \sqrt{L}\cdot O(1),$$
where the inequality is because of  $\|h_{a-1}\|\le O(1)$ from Theorem \ref{thm:hnorm-initialization}.
Next, we have 
\begin{flalign}
\left\|\left[\bW''_l, \bW''_{l-1}, ..., \bW''_{1}\right]\right\|
&= \left\|\left[\bW''_l, \bW''_{l-1}, ..., \bW''_{1}\right]^T\right\| \le \sqrt{\sum_{a=1}^l \|(\bW''_l)^T\|^2}
\le 1.1 \sqrt{\sum_{a=1}^l \|(\bW'_l)^T\|^2}
\le 1.1 \|\bW'_{l:1}\|_F, \label{eq:spectral-to-fro}
\end{flalign}
where the second inequality is from the definition of spectral norm, the third inequality is because of $\|(\breve{\bD}_{l}+\bD''_{l})(\bI+\tau\breve{\bW}_{l})\cdots(\bI+\tau\breve{\bW}_{a+1})(\breve{\bD}_{a}+\bD''_{a})\|\le 1.1$ from Lemma \ref{lem:perturbed-spectral-norm}. 

Hence we have $\|h_L- \breve{h}_{L}\|\le O\left((1+\tau \sqrt{L})\|\bW'\|_F\right) = O\left(\|\bW'\|_F\right)$ because of the assumption $\tau\sqrt{L}\le 1$.

For $l\in[L]$, $\|\bD''_l\|_0\le O(m\omega^{\nicefrac{2}{3}})$. This is because $(\bD''_l)_{k,k}$ is non-zero only at coordinates $k$ where $(\breve{g}_l)_k$ and $(g_l)_k$ have opposite signs, where it holds either $(\bD_l^{(0)})_{k,k}\neq (\breve{\bD}_l)_{k,k}$ or $(\bD_l^{(0)})_{k,k}\neq (\bD_l)_{k,k}$. Therefore by Lemma \ref{lem:perturbed-hnorm}, we have $\|\bD''_l\|_0\le O(m(\omega\tau L)^{\nicefrac{2}{3}})$ if $\|{\bW}'_l\|\le \tau\omega$.

\end{proof}

\section{Proof for Theorem \ref{thm:main-result}}\label{app:thm:main-result}
\mainresult*

\subsection{Convergence Result for GD}

\begin{proof}
Using Theorem \ref{thm:hnorm-initialization} we have $\|h^{(0)}_{i,L}\|\le1.1$ and then using the
randomness of $\bB$, it is easy to show that $\|\bB h_{i,L}^{(0)}-y_{i}^{*}\|^{2}\le O(\log^{2}m)$
with probability at least $1-\exp(-\Omega(\log^{2}m))$, and therefore
\begin{flalign}
F(\overrightarrow{\bW}^{(0)})\le O(n\log^{2}m).
\end{flalign}
Assume that for every $t=0,1,\dots,T-1$, the following holds, 
\begin{flalign}
 & \|\bW_{L}^{(t)}-\bW_{L}^{(0)}\|_{F}\le\omega\stackrel{\Delta}{=}O\left(\frac{\delta^{3/2}}{n^{3}L^{7/2}}\right)\label{eq:neighborassumption}\\
 & \|\bW_{l}^{(t)}-\bW_{l}^{(0)}\|_{F}\le\tau\omega.\label{eq:neighborassumption2}
\end{flalign}
We shall prove the convergence of GD under the assumption \eqref{eq:neighborassumption}
holds, so that previous statements can be applied. At the end, we
shall verify that \eqref{eq:neighborassumption} is indeed satisfied.

Letting $\nabla_{t}=\nabla F(\overrightarrow{\bW}^{(t)})$, we calculate
that 
\begin{flalign}
F(\overrightarrow{\bW}^{(t+1)}) & \le F(\overrightarrow{\bW}^{(t)})-\eta\|\nabla_t\|_{F}^{2}+O(\eta^{2}nm/d)\|\nabla_{t}\|_{F}^{2}+\quad\eta\sqrt{F(\overrightarrow{\bW}^{(t)})}\cdot O\left(\sqrt{\frac{mnL\omega^{\nicefrac{2}{3}}}{d}}(\tau L)^{\nicefrac{4}{3}}\right)\cdot \|\nabla_{t}\|_{F} \nn\\
 & \le\left(1-\Omega\left(\frac{\eta\delta m}{dn}\right)\right)F(\overrightarrow{\bW}^{(t)}),\label{eq:linearconverge}
\end{flalign}
where the first inequality uses Theorem 4, the second inequality uses
the gradient upper bound in Theorem \ref{app:thm:gradient-upperbound} and
the last inequality uses the gradient lower bound in Theorem \ref{thm:gradient-lowerbound}
and the choice of $\eta=O(d/(mn))$ and the assumption on $\omega$ \eqref{eq:neighborassumption}.
That is, after $T=\Omega(\frac{dn}{\eta\delta m})\log\frac{n\log^{2}m}{\epsilon}$
iterations $F(\overrightarrow{\bW}^{(T)})\le\epsilon$.

We need to verify for each $t$, \eqref{eq:neighborassumption} holds. Here we use a result from the Lemma 4.2 in \cite{zou2019improved} that states $\|\bW_L^{(t)}-\bW_L^{(0)}\|_F\le O(\sqrt{\frac{n^2 d\log m }{m\delta}})$.

To guarantee the iterates fall into the region given by $\omega$ \eqref{eq:neighborassumption}, we obtain a bound $m\ge n^8\delta^{-4}dL^7\log^2 m$.
\end{proof}

{
\subsection{Convergence Result for SGD}\label{app:thm:main-result-sgd}

\begin{restatable}{theorem}{mainresultsgd}\label{thm:main-result-sgd}
For the ResNet defined and initialized as in Section \ref{sec:model}, the network width $m\ge \Omega(n^{17}L^7b^{-4}\delta^{-8}d\log^2 m)$. Suppose we do stochastic gradient descent update starting from $\overrightarrow{\bW}^{(0)}$ and 
\begin{flalign}
	\overrightarrow{\bW}^{(t+1)} = \overrightarrow{\bW}^{(t)} - \eta \frac{n}{|S_t|}\sum_{i\in S_t} \nabla F_i(\overrightarrow{\bW}^{(t)}), \label{eq:sgdupdate}
\end{flalign}
where $S_t$ is a random subset of $[n]$ with $|S_t|=b$.  Then with probability at least $1-\exp(-\Omega(\log^{2}m))$, stochastic gradient
descent \eqref{eq:sgdupdate} with learning rate $\eta=\Theta(\frac{db\delta}{n^{3}m\log m})$
finds a point $F(\overrightarrow{\bW})\le\epsilon$ in $T=\Omega(n^{5}b^{-1}\delta^{-2}\log m \log^2 \frac{1}{\epsilon})$ iterations.
\end{restatable}

\begin{proof}
The proof of the case of SGD can be adapted from the proof of Theorem 3.8 in \cite{zou2019improved}.
\end{proof}
}

\section{Proofs of Theorem \ref{thm:converse_tau} and Proposition \ref{clm:norm-with-bn}}\label{app:sec:proof-converse-tau}
\conversetau*

\begin{proof}
By induction we can show for any $k\in[m]$ and $l\in[L-1]$, 
\begin{equation}
(h_{l})_{k}\geq\phi\left(\sum_{a=1}^{l}\left(\tau\bW_{a}h_{a-1}\right)_{k}\right).\label{eq:hl-lowerbound}
\end{equation}
It is easy to verify $(h_1)_k = \phi\left((h_0)_k+(\tau\bW_1h_0)_k\right)\ge \phi\left((\tau\bW_1h_0)_k\right)$ because of $(h_0)_k\ge0$.

Then assume $(h_l)_k \ge \phi\left(\sum_{a=1}^{l}\left(\tau\bW_{a}h_{a-1}\right)_{k}\right)$, we show it holds for $l+1$.{\small
\begin{flalign*}
(h_{l+1})_k&= \phi\left((h_l)_k+(\tau\bW_{l+1}h_l)_k\right)\ge \phi\left(\phi\left(\sum_{a=1}^{l}\left(\tau\bW_{a}h_{a-1}\right)_{k}\right)+(\tau\bW_{l+1}h_l)_k\right)\ge \phi\left(\sum_{a=1}^{l+1}\left(\tau\bW_{a}h_{a-1}\right)_{k}\right),
\end{flalign*}}
where the last inequality can be shown by case study.

Next we can compute the mean and variance of $\sum_{a=1}^{l}\left(\tau\bW_{a}h_{a-1}\right)_{k}$ by taking iterative conditioning. We have  
\begin{flalign}
\mathbb{E} \sum_{a=1}^{l}\left(\tau\bW_{a}h_{a-1}\right)_{k}=0,\quad \mathbb{E} \left(\sum_{a=1}^{l}\left(\tau\bW_{a}h_{a-1}\right)_{k}\right)^2=\frac{\tau^2}{m}\sum_{a=1}^l\mathbb{E}\|h_{a-1}\|^2. \label{eq:variance}
\end{flalign}

Moreover, $(\tau \bW_a h_{a-1})_{k}$ are jointly Gaussian for all $a$ with mean $0$ because $\bW_a$'s are drawn from independent Gaussian distributions. 
We use $l=2$ as an example to illustrate the conclusion, it can be generalized to other $l$. Assume that $h_{0}$ is fixed. First it is easy to verify that $(\tau \bW_1 h_{0})_{k}$  is Gaussian variable with mean $0$ and $(\tau \bW_2 h_{1})_{k}\big|\bW_1$  is also Gaussian variable with mean $0$. Hence $[(\tau \bW_1 h_{0})_{k}, (\tau \bW_2 h_{1})_{k}]$ follows jointly Gaussian with mean vector $[0,0]$. Thus $(\tau \bW_1 h_{0})_{k}+(\tau \bW_2 h_{1})_{k}$ is Gaussian with mean $0$. By induction, we have $\sum_{a=1}^l(\tau \bW_a h_{a-1})_{k}$ is Gaussian with mean $0$. Then we have 
\begin{flalign}
\mathbb{E} \|h_{l}\|^2&\ge \sum_{k=1}^{m}\mathbb{E}\left(\phi\left(\sum_{a=1}^{l}\left(\tau\bW_{a}h_{a-1}\right)_{k}\right)\right)^{2} =\sum_{k=1}^{m}\frac{1}{2}\mathbb{E}\left(\sum_{a=1}^{l}\left(\tau\bW_{a}h_{a-1}\right)_{k}\right)^{2} \nn\\
&=\frac{1}{2} \sum_{k=1}^{m}\frac{\tau^{2}\sum_{a=1}^{l}\mathbb{E}\left[\|h_{a-1}\|^{2}\right]}{m} = \frac{\tau^{2}}{2}\sum_{a=1}^{l}\mathbb{E}\|h_{a-1}\|^{2},\label{eq:finalstep}
\end{flalign}
where the first step is due to \eqref{eq:hl-lowerbound}, the second step is due to the symmetry of Gaussian distribution and the third step is due to \eqref{eq:variance}. Since $(h_{l})_{k} = \phi\left((h_{l-1})_{k} + \left(\bW_{l}h_{l-1}\right)_{k}\right)$, we can show $\mathbb{E}(h_l)_k^2 \ge (h_{l-1})_{k}^2$ given $h_{l-1}$ by numerical integral of Gaussian variable over an interval. Hence we have $\mathbb{E}\|h_{l}\|^{2}\ge\mathbb{E} \|h_{l-1}\|^2\ge\cdots\ge \mathbb{E} \|h_{0}\|^2=1$ by iteratively taking conditional expectation. Then combined with \eqref{eq:finalstep} and the choice of $\tau=L^{-\frac{1}{2}+c}$, we have $\mathbb{E}\|h_{L-1}\|^2 \ge \frac{1}{2}L^{2c}$. Because $(\bW_L)_{i,j}\sim \cN(0,2/m)$ and $h_L=\phi(\bW_L h_{L-1})$, we have $\mathbb{E} \|h_L\|^2 = \|h_{L-1}\|^2$. Thus, the claim is proved.
\end{proof}

\normwithbn*

\begin{proof}
From the inequality (\ref{eq:hl-lowerbound}) in the previous proof, we know for any $k\in[m]$ and $l\in[L-1]$, 
\begin{equation}
(h_{l})_{k}\geq\phi\left(\sum_{a=1}^{l}\left(\tilde{z}_{a}\right)_{k}\right).\label{eq:hl-lowerbound-bn}
\end{equation}

Next we can compute the mean and variance of $\sum_{a=1}^{l}\left(\tilde{z}_a\right)_{k}$ by taking iterative conditioning. We have  
\begin{flalign}
\mathbb{E} \sum_{a=1}^{l}\left(\tilde{z}_a\right)_{k}=0,\quad \mathbb{E} \left(\sum_{a=1}^{l}\left(\tilde{z}_a\right)_{k}\right)^2=\sum_{a=1}^l\mathbb{E}((\tilde{z}_a)_k)^2= l. \label{eq:variance}
\end{flalign}

Then we have 
\begin{flalign}
\mathbb{E} \|h_{l}\|^2\ge \sum_{k=1}^{m}\mathbb{E}\left(\phi\left(\sum_{a=1}^{l}\left(\tilde{z}_a\right)_{k}\right)\right)^{2} = \frac{1}{2}\sum_{k=1}^{m}\mathbb{E}\left[\sum_{a=1}^{l}\left(\tilde{z}_a\right)_{k}\right]^2 = \frac{1}{2}ml,\label{eq:finalstep-bn}
\end{flalign}
where the first step is due to \eqref{eq:hl-lowerbound}, the second step is due to the symmetry of random variable $(\tilde{z}_a)_k$ and the third step is due to \eqref{eq:variance}. The proposition is proved.
\end{proof}

\section{More Empirical Studies}
\label{app:more_experiment}

{
\begin{figure}
\centering
  \includegraphics[width=0.95\linewidth]{figs/journal_fig4.png}
  \caption{Validation accuracy on CIFAR10 of ResNets with different choices of $\tau$ ($\tau=1/L$, $\tau=1/\sqrt{L}$, $\tau=1/L^{1/4}$).  }
  \label{fig:resnet_trn_curve}
\end{figure}

\begin{table}
\caption{Validation accuracy of ResNet110+$\tau$ with different learning rates. }
\label{tbl:vary_lr}
\def\arraystretch{1.25}
\begin{center}
\begin{tabular}{|c|c|c|}
\hline Lr &  $\tau=1/L$  &  $\tau=1/\sqrt{L}$ \\
  \hline 0.1 & 82.7 & 92.2 \\ \hline
     0.2  & 85.6 & \textbf{92.5}  \\\hline
  0.4  & \textbf{86.8} & 92.2  \\\hline
   0.8 & 86.3 & 90.7 \\\hline
   1.6  & 84.4 & 10.0 \\ \hline
\end{tabular}
\end{center}
\end{table}

We do more experiments to demonstrate the points in Section \ref{sec:experiment}.

Besides the basic feedforward structure in Section \ref{subsec:experiment-theory},   we do another experiment to demonstrate that $\tau=1/\sqrt{L}$ is sharp with practical structures (see Figure \ref{fig:resnet_trn_curve}). We can see that for ResNet110 and ResNet1202, $\tau=1/L^{1/4}$ cannot train the network effectively. 

One may wonder if we can tune the learning rate for the case of $\tau=1/L$ to achieve  validation accuracy as well as the case of $\tau=1/\sqrt{L}$. We do a new experiment to verify this (see Table \ref{tbl:vary_lr}). Specifically, for ResNet110 with fixed $\tau=1/L$ and $\tau=1/\sqrt{L}$ on CIFAR10 classification task, we tune the learning rate from 0.1 to 1.6 and record the validation accuracy in Table \ref{tbl:vary_lr}. We can see that ResNet110 with $\tau=1/L$  performs inferior to that with $\tau=1/\sqrt{L}$ even with grid search of learning rates. It is possible that we can achieve a bit better  performance by adjusting the learning rate for $\tau=1/L$. But this requires tuning for each depth.  In contrast, we have shown that with $\tau=1/\sqrt{L}$, one learning rate fits for all nets with different depths.
}

\end{document}